\newcommand{\commentsymbol}{//}
\algrenewcommand\algorithmiccomment[1]{\hfill \commentsymbol{} #1}
\newcommand{\LineComment}[2][\algorithmicindent]{\Statex \hspace{#1}\commentsymbol{} #2}
\renewcommand{\arraystretch}{1.4}
\apptocmd{\appendix}{\toggletrue{inappendix}}{}{\errmessage{failed to patch \appendix}}
\patchcmd{\hyper@makecurrent}{%
    \ifx\Hy@param\Hy@chapterstring
        \let\Hy@param\Hy@chapapp
    \fi
}{%
    \iftoggle{inappendix}{
        \@checkappendixparam{chapter}%
        \@checkappendixparam{section}%
        \@checkappendixparam{subsection}%
        \@checkappendixparam{subsubsection}%
    }{}%
}{}{\errmessage{failed to patch}}
\newcommand*{\@checkappendixparam}[1]{%
    \def\@checkappendixparamtmp{#1}%
    \ifx\Hy@param\@checkappendixparamtmp
        \let\Hy@param\Hy@appendixstring
    \fi
}
\DeclarePairedDelimiterX{\infdivx}[2]{(}{)}{%
  #1\;\delimsize\|\;#2%
}
\newcommand{\infdiv}{D_{\mathrm{KL}}\infdivx}
\newtheorem{theorem}{Theorem}
\newtheorem{corollary}{Corollary}
\newtheorem{lemma}{Lemma}
\definecolor{LightCyan}{rgb}{0.75,0.9,1}
\newcommand{\paddedcolorbox}[2]{
  \begingroup
  \setlength{\fboxsep}{5pt}
  \colorbox{#1}{#2}
  \endgroup
}
\gdef\@copyrightpermission{
  \begin{minipage}{0.2\columnwidth}
   \href{https://creativecommons.org/licenses/by/4.0/}{\includegraphics[width=0.90\textwidth]{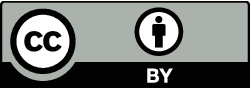}}
  \end{minipage}\hfill
  \begin{minipage}{0.8\columnwidth}
   \href{https://creativecommons.org/licenses/by/4.0/}{This work is licensed under a Creative Commons Attribution International 4.0 License.}
  \end{minipage}
  \vspace{5pt}
}
\title{Enhancing Offline Reinforcement Learning with Curriculum Learning-Based Trajectory Valuation}
\author{Amir Abolfazli}
\affiliation{
  \institution{L3S Research Center}
  \city{Hannover}
  \country{Germany}}
\email{abolfazli@l3s.de}
\author{Zekun Song}
\affiliation{
  \institution{Technical University of Berlin}
  \city{Berlin}
  \country{Germany}}
\email{zekun.song@tu-berlin.de}
\author{Avishek Anand}
\affiliation{
  \institution{Delft University of Technology}
  \city{Delft}
  \country{Netherlands}}
\email{avishek.anand@tudelft.nl}
\author{Wolfgang Nejdl}
\affiliation{
  \institution{L3S Research Center}
  \city{Hannover}
  \country{Germany}}
\email{nejdl@l3s.de}
\begin{abstract}
The success of deep reinforcement learning (DRL) relies on the availability and quality of training data, often requiring extensive interactions with specific environments. In many real-world scenarios, where data collection is costly and risky, offline reinforcement learning (RL) offers a solution by utilizing data collected by domain experts and searching for a batch-constrained optimal policy. This approach is further augmented by incorporating external data sources, expanding the range and diversity of data collection possibilities. However, existing offline RL methods often struggle with challenges posed by non-matching data from these external sources. In this work, we specifically address the problem of source-target domain mismatch in scenarios involving mixed datasets, characterized by a predominance of source data generated from random or suboptimal policies and a limited amount of target data generated from higher-quality policies. To tackle this problem, we introduce Transition Scoring (TS), a novel method that assigns scores to transitions based on their similarity to the target domain, and propose Curriculum Learning-Based Trajectory Valuation (CLTV), which effectively leverages these transition scores to identify and prioritize high-quality trajectories through a curriculum learning approach. Our extensive experiments across various offline RL methods and MuJoCo environments, complemented by rigorous theoretical analysis, demonstrate that CLTV enhances the overall performance and transferability of policies learned by offline RL algorithms.
\end{abstract}
\keywords{Offline Reinforcement Learning, Trajectory Valuation}
\newcommand{\BibTeX}{\rm B\kern-.05em{\sc i\kern-.025em b}\kern-.08em\TeX}
\begin{document}

\pagestyle{fancy}
\fancyhead{}

\maketitle 

%%%%%%%%%%%%%%%%%%%%%%%%%%%%%%%%%%%%%%%%%%%%%%%%%%%%%%%%%%%%%%%%%%%%%%%%

\section{Introduction}
Offline Reinforcement Learning (RL) is a class of RL methods that requires the agent to learn from a dataset of pre-collected experiences without further environment interaction~\citep{lange2012batch}. This learning paradigm decouples exploration from exploitation, rendering it particularly advantageous in scenarios where the process of data collection is costly, time-consuming, or risky~\citep{isele2018safe,fujimoto2019off}. 

By utilizing pre-collected datasets, offline RL can bypass the technical challenges that are associated with online data collection, and has potential benefits for a number of real environments, such as human-robot collaboration and autonomous systems~\citep{breazeal2008learning,tang2021model}.

However, this task is challenging, as offline RL methods suffer from the \emph{extrapolation error}~\citep{fujimoto2019off,kumar2019stabilizing}. This issue arises when offline deep RL methods are trained under one distribution but evaluated on a different one. More specifically, value functions implemented by a function approximator have a tendency to predict unrealistic values for unseen state-action pairs for standard off-policy deep RL algorithms such as BCQ~\citep{fujimoto2019off}, TD3+BC~\citep{fujimoto2021minimalist}, CQL~\citep{kumar2020conservative} and IQL~\citep{kostrikov2021offline}. This highlights the necessity for approaches that restrict the action space, forcing the agent to learn a behavior that is closely aligned with on-policy with respect to a subset of the source data~\citep{fujimoto2019off}.

In recent years, there have been a number of efforts within the paradigm of supervised learning for overcoming the \emph{source-target domain mismatch problem} valuating data, including \emph{data Shapley}~\citep{ghorbani2019data} and \emph{data valuation using reinforcement learning} (DVRL)~\citep{yoon2020data}. Such methods have shown promising results on several application scenarios such as robust learning and domain adaptation~\citep{yoon2020data}.

Despite the success of such methods in the supervised learning setting, adapting them to the offline reinforcement learning (RL) setting presents several challenges. One major issue is the non-i.i.d. nature of the data. In supervised learning, data samples are typically assumed to be independent and identically distributed (i.i.d.), but this assumption is violated in offline RL since the data is generated by an agent interacting with an environment~\citep{levine2020offline}. The presence of correlated and non-i.i.d. data samples complicates the valuation of these transitions and hinders generalization to the target domain. Additionally, the data distribution in offline RL often changes due to the agent's evolving policy or the environment's dynamics, leading to \emph{distributional shifts} that aggravate the valuation of transitions, as their value may fluctuate with changing dynamics~\citep{kumar2019stabilizing,kumar2020conservative}. 

Furthermore, unlike supervised learning, where the objective is to optimize a loss function given input-output pairs, the objective in RL is to maximize cumulative rewards by learning a policy that maps states to actions~\citep{sutton2018reinforcement}. Therefore, designing a reward function that accounts for distributional similarities between transition items from different domains is essential for effective policy learning.

These challenges highlight the complexities involved in adapting supervised learning methods to the offline RL setting and underscore the need for novel approaches to handle these challenges.

However, a recent work~\citep{cai2023curriculum} introduced CUORL, a curriculum learning-based approach aimed at enhancing the performance of offline RL methods by strategically selecting valuable transition items. The limitation of CUORL is that only the current policy is considered to valuate trajectories, which lacks consideration of the information in the target dataset, making it challenging for the agent to adapt to different environments. 

Another recent work~\citep{hong2023harnessing} introduced Harness, which addresses a major challenge in offline RL involving mixed datasets, where the prevalence of low-return trajectories can limit the effectiveness of advanced algorithms, preventing them from fully exploiting high-return trajectories. Harness tackles this problem by re-weighting the dataset sampling process to create an artificial dataset that results in a behavior policy with a higher overall return, enabling RL algorithms to better utilize high-performing trajectories. By optimizing the sampling strategy, Harness enhances the performance of offline RL algorithms in environments characterized by mixed-return datasets. However, by re-weighting the dataset sampling process, Harness introduces a bias in favor of high-return trajectories, potentially neglecting important and high-quality transition items from low-return trajectories that could enhance the robustness of the policy.

In this work, we introduce a transition scoring (TS) method, which assigns a score to each transition in a trajectory, and a curriculum learning-based trajectory valuation method (CLTV) that operates based on the scores computed by TS to enable the agent to identify and select the most promising trajectories. Unlike existing methods, our approach leverages high-quality transitions from different trajectories generated by various policies. 

Our results, using existing methods (CUORL and Harness) and two base offline RL algorithms (CQL and IQL) in four MuJoCo environments, show that our CLTV method improves the performance and transferability of offline RL policies. We also provide a theoretical analysis to demonstrate the efficacy of our approach.

\section{Background}
\label{sec:background}
\noindent\textbf{Reinforcement Learning.}
The RL problem is typically modeled by a Markov decision process (MDP), formulated as a tuple \scalebox{0.9}{$(\mathcal{X}, \mathcal{U}, p, r, \gamma)$}, with a state space $\mathcal{X}$, an action space $\mathcal{U}$, and transition dynamics $p$.

At each discrete time step, the agent performs an action \( u \in \mathcal{U} \) in a state \( x \in \mathcal{X} \), transitions to a new state \( x^{\prime} \in \mathcal{X} \) based on the transition dynamics \(p(x^{\prime} \mid x, u)\), and receives a reward \( r(x, u, x^{\prime}) \). The agent's goal is to maximize the expectation of the sum of discounted rewards, also known as the return \( R_{t} = \sum_{i=t+1}^{\infty} \gamma^{i} r(x_{i}, u_{i}, x_{i+1}) \), which weights future rewards with respect to the discount factor \( \gamma \in [0,1) \), determining the effective horizon. The agent makes decisions based on the policy \( \pi: \mathcal{X} \rightarrow \mathcal{P}(\mathcal{U}) \), which maps a given state \( x \) to a probability distribution over the action space \( \mathcal{U} \). For a given policy \( \pi \), the value function is defined as the expected return of an agent starting from state \( x \), performing action \( u \), and following the policy \( Q^{\pi}(x, u) = \mathbb{E}_{\pi}[R_{t} \mid x, u] \). The state-action value function can be computed through the Bellman equation for the Q function:
\begin{equation}
Q^{\pi}(x, u)=\mathbb{E}_{s^{\prime} \sim p}\left[r\left(x, u, x^{\prime}\right)+\gamma \mathbb{E}_{u^{\prime} \sim \pi} Q^{\pi}(x^{\prime}, u^{\prime})\right].
\end{equation}

Given $Q^{\pi}$, the optimal policy $\pi^{*}=\operatorname{max}_{u} Q^{*}(x, u)$, can be obtained by greedy selection over the optimal value function $Q^{*}(x, u)=\max_{\pi} Q^{\pi}(x, u)$. 

For environments confronting agents with the curse of dimensionality, the value can be estimated with a
differentiable function approximator $Q_\theta(x, u)$, with parameters $\theta$.

\noindent\textbf{Offline Reinforcement Learning.}
Standard off-policy deep RL algorithms such as deep Q-learning (DQN)~\citep{mnih2015human} and deep deterministic policy gradient (DDPG)~\citep{lillicrap2015continuous} are applicable in batch RL as they are based on more fundamental batch RL algorithms~\citep{fujimoto2019benchmarking}. 

However, they suffer from a phenomenon, known as \textit{extrapolation error}, which occurs when there is a mismatch between the given fixed batch of data and true state-action visitation of the current policy~\citep{fujimoto2019off}. This is problematic as incorrect values of state-action pairs, not contained in the batch, are propagated through temporal difference updates of most off-policy algorithms~\citep{sutton1988learning}, resulting in poor performance of the model~\citep{thrun1993issues}. 

In offline RL, the goal is to learn a policy based on a previously collected dataset of transition items, optimizing decision-making without any additional interaction with the environment.
\section{Problem Description} 
\label{sec:problem_desc}
We assume the availability of a source dataset \scalebox{0.8}{{$\mathcal{D_S}= \{(x_{i}^{\mathcal{S}}, u^{\mathcal{S}}_{i}, {x^{\prime}_{i}}^{\mathcal{S}},$}} \scalebox{0.8}{{$ {r}_{i}^{\mathcal{S}})\}_{i=1}^{N} \sim \mathcal{P}_{\mathcal{S}}$}} and a target dataset \scalebox{0.8}{$\mathcal{D_T}=$ $\{(x_{i}^{\mathcal{T}}, u^{\mathcal{T}}_{i}, {x^{\prime}_{i}}^{\mathcal{T}}, {r}_{i}^{\mathcal{T}})\}_{i=1}^{M} \sim \mathcal{P}_{\mathcal{T}}$}, where $x \in \mathbb{R}^{m}$ is a state; $u \in \mathbb{R}^{n}$ is the action that the agent performs at the state $x$; $r \in \mathbb{R}$ is the reward that the agent gets by performing the action $u$ in the state $x$; and $x^{\prime} \in \mathbb{R}^{m}$ is the state that the agent transitions to (i.e., next state). We also assume that the target dataset $\mathcal{D_T}$ is much smaller than the the source dataset $\mathcal{D_S}$, therefore $N \gg M$. Furthermore, the source distribution $\mathcal{P}_{\mathcal{S}}$ can be different from the target distribution $\mathcal{P}_{\mathcal{T}}$ (i.e. $\mathcal{P}_{\mathcal{S}} \neq \mathcal{P}_{\mathcal{T}}$), confronting the agent with the source-target domain mismatch problem.

Moreover, while intrinsic reward functions may be the same across source and target domains (i.e., $r^{\mathcal{S}} = r^{\mathcal{T}}$), modifications to the reward functions in either domain can lead to effective differences (i.e., $r^{\mathcal{S}} \neq r^{\mathcal{T}}$). We assume that the reward functions can potentially be different across the domains. These potential differences in reward functions could exacerbate the domain mismatch and introduce additional challenges for effective knowledge transfer. Each trajectory $\tau_k^{\mathcal{S}}$ in $\mathcal{D_S}$ is defined as a sequence of transitions \scalebox{0.8}{$\tau_k^{\mathcal{S}} = \{(x_{i}^{k,\mathcal{S}}, u_{i}^{k,\mathcal{S}}, {x^{\prime}_{i}}^{k,\mathcal{S}}, r_{i}^{k,\mathcal{S}})\}_{i=1}^{L_k^{\mathcal{S}}}$}, where $L_k^{\mathcal{S}}$ denotes the length of $k$-th trajectory. The degree of similarity between these transitions and those in the target dataset $\mathcal{D_T}$ varies.

The goal is to identify high-quality trajectories by quantifying the similarity of each trajectory $\tau_k^{\mathcal{S}}$ from the source dataset to the target dataset $\mathcal{D_T}$, aggregating the similarities of the individual transitions within $\tau_k^{\mathcal{S}}$ to those in $\mathcal{D_T}$, and selecting trajectories most relevant for effective knowledge transfer.

\section{Related Work}
\label{sec:related_work}
\noindent\textbf{Offline RL.}
The RL literature contains numerous techniques for dealing with the source-target domain mismatch problem. DARLA ~\citep{higgins2017darla}, a zero-shot transfer learning method that learns disentangled representations that are robust against domain shifts; and SAFER~\citep{slack2022safer}, which accelerates policy learning on complex control tasks by considering safety constraints. Meanwhile, the literature on off-policy RL includes principled experience replay memory sampling techniques. Prioritized Experience Replay (PER)~\citep{schaul2015prioritized} (e.g., \citep{hou2017novel,horgan2018distributed,kang2021deep}) attempts to sample transitions that contribute the most toward learning. However, most of the work to date on offline RL is focused on preventing the training policy from being too disjoint with the behavior policy~\citep{fujimoto2019off,kumar2020conservative,kidambi2020morel}. 

Batch-Constrained deep Q-learning (BCQ)~\citep{fujimoto2019off} is an offline RL method for continuous control, restricting the action space, thereby eliminating actions that are unlikely to be selected by the behavior policy and therefore rarely observed in the batch. 

Conservative Q-Learning (CQL)~\citep{kumar2020conservative} prevents the training policy from overestimating the Q-values by utilizing a penalized empirical RL objective. More precisely, CQL optimizes the value function not only to minimize the temporal difference error based on the interactions seen in the dataset but also minimizes the value of actions that the currently trained policy takes, while at the same time maximizing the value of actions taken by the behavior policy during data generation.

Twin Delayed Deep Deterministic (TD3) policy gradient with Behavior Cloning (BC) is a model-free algorithm that trains a policy to emulate the behavior policy from the data~\citep{fujimoto2021minimalist}. TD3-BC is an adaptation of TD3~\citep{fujimoto2018addressing} to the offline setting, adding a behavior cloning term to policy updates to encourage agents to align their actions with those found in the dataset. 

Advantage Weighted Actor-Critic (AWAC)~\citep{nair2020awac} uses a Q-function to estimate the advantage to increase sample efficiency.

To increase the generalization capability of offline RL methods, ~\citep{kostrikov2021offline} propose in-sample Q-learning (IQL), approximating the policy improvement step by considering the state value function as a random variable with some randomness determined by the action, and then taking a state-conditional expectile of this random variable to estimate the value of the best actions in that state. 
Policy in Latent Action Space (PLAS)~\citep{zhou2021plas} constrains actions to be within the support of the behavior policy. PLAS disentangles the in-distribution and out-of-distribution generalization of actions, enabling fine-grained control over the method's generalization. \citep{prudencio2023survey} provide a comprehensive overview of offline RL. 
~\citep{hong2023harnessing} introduce Harness, which addresses a challenge in offline RL involving mixed datasets. The prevalence of low-return trajectories in these datasets can limit the effectiveness of algorithms, preventing them from fully exploiting high-return trajectories. Harness tackles this problem by re-weighting the dataset sampling process to create an artificial dataset that results in a behavior policy with a higher return. This enables RL algorithms to better utilize high-performing trajectories.

\vspace{0.5em}
\noindent\textbf{Curriculum Learning in RL.}
In reinforcement learning, curriculum learning (CL) sequences tasks in a strategic manner, enhancing agent performance and accelerating training, especially for complex challenges~\citep{narvekar2020curriculum,turchetta2020safe,liu2021curriculum,ren2018self}. CL has shown promising performance in real-world applications~\citep{xu2019macro,el2020student,matavalam2022curriculum}, helping agents solve complex problems and transfer knowledge across different tasks~\citep{narvekar2020curriculum,klink2021boosted}.

Among the works on curriculum learning and transition valuation in RL, the most relevant to ours is CUORL~\citep{cai2023curriculum}. CUORL enhances offline RL methods by training agents on pertinent trajectories but struggles with non-deterministic policy properties. Additionally, CUORL only considers the current policy to evaluate trajectories, neglecting information from the target dataset, which hinders the agent's adaptability to different environments. 

Our method addresses these issues by using KL divergence to capture non-deterministic properties and by identifying and selecting high-quality transition items from different trajectories.
\section{Methods}
\label{sec:proposed_method}
Our proposed method includes two key components: Transition Scoring (TS) and Curriculum Learning-Based Trajectory Valuation (CLTV). TS assigns scores to transitions based on their relevance to the target domain, helping to identify high-quality transitions. 

CLTV leverages TS scores to prioritize high-quality trajectories through curriculum learning, guiding the agent's development in a structured manner. These components are discussed below.

\noindent\textbf{Transition Scoring (TS).}
Inspired by DVRL~\citep{yoon2020data}, we adopt the REINFORCE algorithm~\citep{williams1992simple} and use a DNN $v_{\phi}$ as the transition scoring method. The goal is to find the parameters $\phi^*$ of the DNN  so that the network returns the optimal probability distribution over the set of all possible transitions.

The TS model $v_{\phi}: \mathcal{X} \times \mathcal{U} \times \mathcal{X^{\prime}} \rightarrow[0,1]$ is optimized to output scores corresponding to the similarity of transitions in the source dataset to the target domain. We formulate the corresponding optimization problem as:

\begin{equation}
\resizebox{0.72\hsize}{!}{%
$\max _\phi J\left(\pi_\phi\right)=\mathbb{E}_{\left(x^{\mathcal{S}}, u^{\mathcal{S}}, x^{\prime \mathcal{S}}\right) \sim P^{\mathcal{S}}}\left[r_\phi\left(x^{\mathcal{S}}, u^{\mathcal{S}}, x^{\prime \mathcal{S}}, \Delta_\theta\right)\right]$,
}
\end{equation}
where
\begin{equation}
\label{eq-reward}
\resizebox{0.72\hsize}{!}{
$r_\phi\left(x^{\mathcal{S}}, u^{\mathcal{S}}, x^{\prime \mathcal{S}}, \Delta_\theta\right)=2 \cdot \frac{r_\phi^{\prime}\left(x^{\mathcal{S}}, u^{\mathcal{S}}, x^{\prime \mathcal{S}}, \Delta_\theta\right)-r_{\text {min }}^{\prime}}{r_{\text {max }}^{\prime}-r_{\text {min }}^{\prime}}-1$.
}
\end{equation}

\autoref{eq-reward} represents the normalized reward, keeping the rewards in the range \([-1, 1]\), where \scalebox{0.8}{$r_\phi^{\prime}\left(x^{\mathcal{S}}, u^{\mathcal{S}}, x^{\prime \mathcal{S}}, \Delta_\theta\right)$} is the unnormalized reward and defined as follows:

\begin{equation}
\label{eqq4}
\resizebox{0.85\hsize}{!}{%
\begin{math}
\begin{split}
r_\phi^{\prime}\left(x^{\mathcal{S}}, u^{\mathcal{S}}, x^{\prime \mathcal{S}}, \Delta_\theta\right) &= \delta \cdot \sum_{\left(x^{\mathcal{S}}, u^{\mathcal{S}}, x^{\prime \mathcal{S}}\right) \in \mathcal{B}} v_\phi\left(x^{\mathcal{S}}, u^{\mathcal{S}}, x^{\prime \mathcal{S}}\right) \cdot \Delta_\theta\left(x^{\mathcal{S}}, u^{\mathcal{S}}, x^{\prime \mathcal{S}}\right) \\
&\quad + (1-\delta) \cdot \sum_{\left(x^{\mathcal{S}}, u^{\mathcal{S}}, x^{\prime \mathcal{S}}\right) \in \mathcal{B}} v_\phi\left(x^{\mathcal{S}}, u^{\mathcal{S}}, x^{\prime \mathcal{S}}\right).
\end{split}
\end{math}
}
\end{equation}

In \autoref{eqq4}, the term \scalebox{0.8}{$\sum_{\left(x^{\mathcal{S}}, u^{\mathcal{S}}, x^{\prime \mathcal{S}}\right) \in \mathcal{B}} v_\phi\left(x^{\mathcal{S}}, u^{\mathcal{S}}, x^{\prime \mathcal{S}}\right)$} regularizes the unnormalized reward and prevents assigning low scores to the majority of transitions. \(\Delta_\theta\) represents the dynamics factor, quantified by the classifiers \(q_{xu} = p_{\theta_{xu}}(y \mid x, u)\) and \(q_{xu{x}^{\prime}} = p_{\theta_{xu{x}^{\prime}}}(y \mid x, u, {x}^{\prime})\), where \scalebox{1}{\(y \in \{\text{source}, \text{target}\}\)}.

The optimization problem involves adjusting \(\phi\) to maximize the expected reward by aligning transition dynamics with the target domain. The classifiers, parameterized by \(\theta\), remain fixed during the optimization of \(\phi\). These classifiers are implemented as fully connected multi-layer neural networks, similar to those used in DARC~\citep{eysenbach2020off}. They measure the likelihood of transitions belonging to either the source or target domains. By applying Bayes' rule, the classifier probabilities are related to transition probabilities, enabling the expression of posterior probabilities in terms of likelihoods and prior probabilities. 

The classifier \(q_{xu}\) is defined as follows:
\begin{equation}
\label{qxu}
\resizebox{0.55\hsize}{!}{%
\begin{math}
q_{xu} = p_{\theta_{x u}}\left(y \mid x^{\mathcal{S}}, u^{\mathcal{S}}\right)=\frac{p\left(x^{\mathcal{S}}, u^{\mathcal{S}} \mid y\right) p(y)}{p\left(x^{\mathcal{S}}, u^{\mathcal{S}}\right)}.
\end{math}
}
\end{equation}

Similarly, the classifier $q_{xu{x}^{\prime}}$ is defined as follows:
\begin{equation}
\label{qxux}
\resizebox{0.72\hsize}{!}{%
\begin{math}
q_{xu{x}^{\prime}}=p_{\theta_{x u x^{\prime}}}\left(y \mid x^{\mathcal{S}}, u^{\mathcal{S}}, x^{\prime \mathcal{S}}\right)=\frac{p\left(x^{\mathcal{S}}, u^{\mathcal{S}}, x^{\prime \mathcal{S}} \mid y\right) p(y)}{p\left(x^{\mathcal{S}}, u^{\mathcal{S}}, x^{\prime \mathcal{S}}\right)}.
\end{math}
}
\end{equation}

\newpage
We train the classifiers $q_{xu{x}^{\prime}}$ and $q_{xu}$ to minimize the standard cross-entropy loss. The loss function for the classifier $q_{xu}$ is presented in \autoref{loss-xu}.

\begin{equation}
\label{loss-xu}
\resizebox{0.72\hsize}{!}{
\begin{math}
\begin{aligned}
\mathcal{L}_{xu}\left(\theta_{xu}\right) = & -\mathbb{E}_{\mathcal{D}_{\text{target}}}\left[\log q_{\theta_{xu}}\left(\text{target} \mid x^{\mathcal{S}}, u^{\mathcal{S}}\right)\right] \\
& -\mathbb{E}_{\mathcal{D}_{\text{source}}}\left[\log q_{\theta_{xu}}\left(\text {source} \mid x^{\mathcal{S}}, u^{\mathcal{S}}\right)\right] .
\end{aligned}
\end{math}
}
\end{equation}

Similarly, the loss function for the classifier $q_{xu{x}^{\prime}}$ is presented in \autoref{loss-xux}.

\begin{equation}
\label{loss-xux}
\resizebox{0.85\hsize}{!}{
\begin{math}
\begin{aligned}
\mathcal{L}_{xu{x}^{\prime}}\left(\theta_{xu{x}^{\prime}}\right) = & -\mathbb{E}_{\mathcal{D}_{\text {target}}}\left[\log q_{\theta_{xu{x}^{\prime}}}\left(\text{target} \mid x^{\mathcal{S}}, u^{\mathcal{S}}, x^{\prime \mathcal{S}}\right)\right] \\
& -\mathbb{E}_{\mathcal{D}_{\text{source}}}\left[\log q_{\theta_{xu{x}^{\prime}}}\left(\text {source} \mid x^{\mathcal{S}}, u^{\mathcal{S}}, x^{\prime \mathcal{S}}\right)\right]
\end{aligned}
\end{math}
}
\end{equation}

The dynamics factor $\Delta$ (as defined in ~\citep{eysenbach2020off}) measures the discrepancy between the distribution of the source and target datasets and is defined as follows:

\begin{equation}
\label{eq-delata}
\resizebox{0.70\hsize}{!}{%
\begin{math}
\begin{aligned}
\Delta_\theta\left(x^{\mathcal{S}}, u^{\mathcal{S}}, x^{\prime \mathcal{S}}\right)= & \log q_{\theta_{x u x^{\prime}}}\left(\operatorname{target} \mid x^{\mathcal{S}}, u^{\mathcal{S}}, x^{\prime \mathcal{S}}\right) \\
& -\log q_{\theta_{x u}}\left(\operatorname{target} \mid x^{\mathcal{S}}, u^{\mathcal{S}}\right) \\
& -\log q_{\theta_{x u x^{\prime}}}\left(\operatorname{source} \mid x^{\mathcal{S}}, u^{\mathcal{S}}, x^{\prime \mathcal{S}}\right) \\
& +\log q_{\theta_{x u}}\left(\operatorname{source} \mid x^{\mathcal{S}}, u^{\mathcal{S}}\right).
\end{aligned}
\end{math}
}
\end{equation}

\color{black}

Optimizing $\Delta$ enables the identification of transitions with lower distribution discrepancies. $w$ is the sum of selection probabilities, reflecting the likelihood of each transition to be selected. This factor prevents the TS model from selecting only a limited number of transitions. The parameters \(\delta\) and \(1-\delta\) serve as balancing factors for $v_{\phi}  \Delta$ and \(v_{\phi}\), respectively, in the calculation of \(r^{\prime}_\phi\).

For training the TS, all transitions in the source buffer \(\mathcal{D}_{\mathcal{S}}\) are divided into batches. Each batch \(D_B^{\mathcal{S}} = \left(t_j\right)_{j=1}^{B_s} \sim \mathcal{D}^{\mathcal{S}}\) is provided as input to the TS, with shared parameters across the batch. Let \(w_j = v_\phi\left(x_j^{\mathcal{S}}, u_j^{\mathcal{S}}, x_j^{\prime \mathcal{S}}\right)\) denote the probability that transition item \(j\) from the source buffer is selected for training the offline RL model.

The sampling step \(z_j \sim \text{Bernoulli}(w_j)\) is then used to choose transition items based on their importance scores \(w_j\), following the approach in \citep{yoon2020data}. Here, $z_j$ is a binary indicator that decides whether the corresponding transition is selected ($z_j=1$) or not ($z_j=0$). This method prioritizes high-quality transitions by sampling them with higher probability, effectively focusing on those that contribute most to the performance of the RL agent.

Our adopted version of the REINFORCE algorithm has the following objective function for the policy $\pi_\phi$:

\begin{equation}
\label{objective}
\resizebox{0.75\hsize}{!}{%
\begin{math}
\begin{aligned}
J\left(\pi_{\phi}\right)&= \mathbb{E}_{\substack{\substack{(x^{\mathcal{S}}, u^{\mathcal{S}}, x^{\prime \mathcal{S}}) \sim P^{\mathcal{S}}\\ w \sim \pi_{\phi}(\mathcal{D_S}, \cdot)}}}\left[r_\phi\left(x^{\mathcal{S}}, u^{\mathcal{S}}, x^{\prime \mathcal{S}}, \Delta_\theta\right)\right] \\
&=\int P^{\mathcal{S}}\left((x^{\mathcal{S}}, u^{\mathcal{S}}, x^{\prime \mathcal{S}})\right) \sum_{w \in[0,1]^{N}} \pi_{\phi}(\mathcal{D_S}, w) \\
&\quad \cdot\left[r_\phi\left(x^{\mathcal{S}}, u^{\mathcal{S}}, x^{\prime \mathcal{S}}, \Delta_\theta\right)\right] d \left((x^{\mathcal{S}}, u^{\mathcal{S}}, x^{\prime \mathcal{S}})\right).
\end{aligned}
\end{math}
}
\end{equation}

In the above equation, \(\pi_{\phi}(\mathcal{D_S}, w)\) represents the probability of the selection probability vector \(w\) occurring. The policy uses the scores output by the TS. This contrasts with DVRL~\citep{yoon2020data}, which employs a binary selection vector \(\mathbf{s} = \left(s_1, \ldots, s_{B_s}\right)\), where \(s_{B_s}\) denotes the batch size, \(s_i \in \{0,1\}\), and \(P\left(s_i = 1\right) = w_i\). Thus, in our training, the TS does not control exploration, but instead provides scores for the transition items and is tuned accordingly.

We calculate the gradient of the above objective function (\autoref{objective}) as follows:

\begin{equation}
\label{grad-eq}
\resizebox{0.92\hsize}{!}{%
\begin{math}
\begin{aligned}
    \nabla_{\phi} J\left(\pi_{\phi}\right) 
    &= \mathbb{E}_{\substack{(x^{\mathcal{S}}, u^{\mathcal{S}}, x^{\prime \mathcal{S}}) \sim P^{\mathcal{S}}\\ w \sim \pi_{\phi}(\mathcal{D_S}, \cdot)}} \Bigg[ r_\phi\left(x^{\mathcal{S}}, u^{\mathcal{S}}, x^{\prime \mathcal{S}}, \Delta_\theta\right) \cdot \nabla_{\phi} \log \pi_{\phi}(\mathcal{D_S}, w) \\
    & \quad + \nabla_{\phi} r_\phi\left(x^{\mathcal{S}}, u^{\mathcal{S}}, x^{\prime \mathcal{S}}, \Delta_\theta\right) \Bigg].
\end{aligned}
\end{math}
}
\end{equation}

Then, the parameters $\phi$ of $J\left(\pi_{\phi}\right)$ are updated using the calculated gradient $\nabla_{\phi} J\left(\pi_{\phi}\right)$ as follows:
\begin{equation}
\label{eq:phi_update}
\phi \leftarrow \phi + k \left(r_{\phi} \cdot \nabla_{\phi} \log \pi_{\phi}(D_B^{\mathcal{S}}, (z_1, \ldots, z_{B_{\mathcal{S}}})) + \nabla_{\phi} r_\phi \right),
\end{equation}

where $k$ is the learning rate and $r_{\phi}$ denotes the normalized reward. 

The complete derivation of \autoref{grad-eq} is provided in \autoref{derivation}. Our transition scoring approach is presented in \autoref{alg:ts}.

\begin{algorithm}[!ht]
\caption{Transition Scoring (TS)}
\label{alg:ts}
\begin{algorithmic}[1]
\State \textbf{Input:} Source dataset $\mathcal{D_S}$, target dataset $\mathcal{D_T}$; \\classifiers $q_{xu}$  and $q_{xu{x}^{\prime}}$; \\ratio of transition similarity to reward $\delta$; \\learning rate $k$
\State \textbf{Output:} TS model $v_{\phi}$
\State Train TS $v_{\phi}(t_j)$ with transition item $t_j = (x_j, u_j, {x_{j}}^{\prime})$
\While{not converged}
    \State Sample a mini-batch: $D_B^{\mathcal{S}}=(t_j)_{j=1}^{B_{\mathcal{S}}} \sim \mathcal{D_S}$
    \State Initialize weighted sum $\mathcal{V}_{\phi} = 0$
    \State Initialize selection probability sum $\mathcal{W}_{\phi} = 0$
    \For{$j=1, \ldots, B_{\mathcal{S}}$}
        \State $w_j = v_{\phi}(t_j)$
        \State $z_j \sim \text{Bernoulli}(w_j)$
        \State $\Delta_j = \log \frac{q_{xu{x}^{\prime}}(\text{target} \mid x_j,u_j,{x_{j}}^{\prime})}{q_{xu{x}^{\prime}}(\text{source} \mid x_j,u_j,{x_{j}}^{\prime})} - \log \frac{q_{xu}(\text{target} \mid x_j,u_j)}{q_{xu}(\text{source} \mid x_j,u_j)}$
        \State $\mathcal{V}_{\phi} \leftarrow \mathcal{V}_{\phi} + w_j \cdot \Delta_j$
        \State $\mathcal{W}_{\phi} \leftarrow \mathcal{W}_{\phi} + w_j$
    \EndFor
    \State $r'_{\phi} = \delta \cdot \mathcal{V}_{\phi} + (1 - \delta) \cdot \mathcal{W}_{\phi}$
    \State $r_{\phi} = 2 \cdot \frac{r'_{\phi} - r'_{\min}}{r'_{\max} - r'_{\min}} - 1$
    \State $\phi \leftarrow \phi + k \cdot r_{\phi} \cdot \nabla_{\phi} \log \pi_{\phi}(D_B^{\mathcal{S}}, (z_1, \ldots, z_{B_{\mathcal{S}}}))$
\EndWhile
\end{algorithmic}
\end{algorithm}

\noindent\textbf{Curriculum Learning-based Trajectory Valuation (CLTV).} 
Building upon the transition scoring (TS) mechanism, CLTV focuses on enhancing policy learning by prioritizing valuable trajectories. CLTV employs curriculum learning to present the agent with the most valuable trajectories at different stages of learning. 

CLTV uses two distinct sets of actor-critic networks: one for the source domain \(\left(Q_{\omega_1}^{\mathcal{S}}, \pi_{\theta_1}^{\mathcal{S}}\right)\) and another for the target domain \(\left(Q_{\omega_2}^{\mathcal{T}}, \pi_{\theta_2}^{\mathcal{T}}\right)\). By maintaining separate networks, CLTV tailors the learning process to the unique dynamics and reward structures of each domain, facilitating effective policy transfer. 

CLTV integrates the TS method by training classifiers \(q_{xu{x}^{\prime}}\) and \(q_{xu}\) to adjust the rewards, helping to address the distributional shift between the source and target domains in offline RL. 

Each trajectory $\tau_k^{\mathcal{S}}$ in the source dataset $\mathcal{D}_{\mathcal{S}}$ is defined as a sequence of transitions $\tau_k^{\mathcal{S}}=\left(x_i^{k, \mathcal{S}}, u_i^{k, \mathcal{S}}, x_i^{\prime, k, \mathcal{S}}, r_i^{k, \mathcal{S}}\right)_{i=1}^{L_k^{\mathcal{S}}}$, where $L_k^{\mathcal{S}}$ denotes the length of the $k$-th trajectory. 

The degree of similarity between these transitions in $\mathcal{D}_{\mathcal{S}}$ and those in the target dataset $\mathcal{D}_{\mathcal{T}}$ varies. When the source data does not perfectly align with the target environment, relying solely on the original rewards can cause the agent to learn policies that are suboptimal or misaligned with the target setting. 

Moreover, not all transitions within a trajectory contribute equally to learning; only a subset of transitions contributes to improving the policy. By assigning a relevance score to each transition based on its similarity to the target domain dynamics, we effectively re-weight the rewards associated with these transitions. 

This approach increases the influence of transitions that are more representative of the target environment, guiding the agent to focus on the most valuable experiences. As a result, the agent learns policies that are better aligned with the target domain, enhancing overall performance. This method is especially beneficial in scenarios with sparse or noisy reward signals, as it helps the model identify and leverage valuable experiences.

During the curriculum learning process, the value of the trajectory \(i\) in the source dataset is computed as follows:

\begin{equation}
\resizebox{0.8\hsize}{!}{%
\begin{math}
v_i = \exp\left(-\infdiv{\pi_{\theta_2}^{\mathcal{T}}(u_t|x_t)}{\pi_{\theta_1}^{\mathcal{S}}(u_t|x_t)}\right) \cdot \sum_{t=1}^{T} \gamma^{t-1} r_t.
\end{math}
}
\end{equation}

This valuation formula uses the KL divergence between the target policy \(\pi_{\theta_2}^{\mathcal{T}}\) and the policy we need to train \(\pi_{\theta_1}^{\mathcal{S}}\), along with discounted rewards, to valuate trajectories. It includes two main components: the \emph{similarity component} \scalebox{0.9}{\(\exp\left(-\infdiv{\pi_{\theta_2}^{\mathcal{T}}(u_t|x_t)}{\pi_{\theta_1}^{\mathcal{S}}(u_t|x_t)}\right)\)}, which measures the similarity between policies, and the \emph{return component} \scalebox{0.9}{\(\sum_{t=1}^T \gamma^{t-1} r_t\)}, representing the discounted sum of rewards.

This approach effectively guides the agent toward states with higher expected value by considering both policy similarity and trajectory returns, ensuring that the trajectories closely resemble those in the target dataset while leveraging information from both the source and target domains. After obtaining the trajectory values, we sort all trajectories in decreasing order based on these values. 

We then sample the top \(m\) trajectories, where \(m\) is a hyperparameter that can be adjusted according to the quality of the dataset. Since these selected trajectories are the most valuable ones in terms of similarity to the target dataset, we merge them with the target dataset and use the combined data to update the critic and actor networks. Our CLTV approach is detailed in \autoref{alg:cltv}.

\begin{algorithm}[!ht]
\caption{Curriculum Learning-Based Trajectory Valuation}
\label{alg:cltv}
\begin{algorithmic}[1]
\State \textbf{Input:} Critic networks $Q_{\omega_1}^{\mathcal{S}}$, $Q_{\omega_2}^{\mathcal{T}}$, actor networks $\pi_{\theta_1}^{\mathcal{S}}$, $\pi_{\theta_2}^{\mathcal{T}}$; \\offline RL algorithm $\mathcal{A}$; \\source dataset $\mathcal{D^S}$, target dataset $\mathcal{D^T}$; \\epoch size $E$; \\training steps $S$; \\the ratio of transition score to transition reward $\lambda$
\State \textbf{Output:} Critic network $Q_{\omega_1}^{\mathcal{S}}$, actor network $\pi_{\theta_1}^{\mathcal{S}}$
\State \textbf{Initialization:}
\State Train classifiers $q_{xu}$ and $q_{xu{x}^{\prime}}$
\State Train TS model $v_{\phi}(t_j)$
\LineComment{Modify rewards in the source dataset}
\For{$j = 1, \ldots, |\mathcal{D^S}|$}
    \State $w_j = v_{\phi}(t_j)$
    \State $r_j \leftarrow (1 - \lambda) \cdot r_j + \lambda \cdot w_j$
\EndFor
\LineComment{Curriculum learning loop}
\For{$e = 1$ to $E$}
    \State $\mathcal{D} \leftarrow \mathcal{D^S}$
    \LineComment{Compute values for each trajectory in the source dataset}
    \For{each trajectory $\tau_i = \{(x_j^i, u_j^i, r_j^i)\}_{j=1}^T$ in $\mathcal{D}$}
        \State \scalebox{0.9}{$v_i = \exp\left(-\infdiv{\pi_{\theta_2}^{\mathcal{T}}(u_t|x_t)}{\pi_{\theta_1}^{\mathcal{S}}(u_t|x_t)}\right) \cdot \sum_{t=1}^{T} \gamma^{t-1} r_t$}
    \EndFor
    \LineComment{Sort trajectories by $v_i$ in decreasing order}
    \For{$s = 1$ to $S$}
        \LineComment{Sample $m$ trajectories $\{\tau_i\}_{i=1}^m$ from $\mathcal{D}$}
        \State $\mathcal{D}^{train} = \{\tau_i\}_{i=1}^m \cup \mathcal{D}^{\mathcal{T}}$
        \State Update $\omega_1, \theta_1$ with $\mathcal{A}(\mathcal{D}^{train}, Q_{\omega_1}^{\mathcal{S}}, \pi_{\theta_1}^{\mathcal{S}})$
    \EndFor
\EndFor
\end{algorithmic}
\end{algorithm}

To show the effectiveness of our approach, we first present \autoref{theorem2}, which provides a formal justification for TS by establishing bounds on the policy's performance in the target domain, and then present \autoref{theorem1}, which demonstrates how CLTV ensures value alignment with the target domain through policy improvement. Both theorems are discussed in detail in \autoref{proofs}.

\subsection{Theoretical Analysis}
\label{proofs}
In this section, we present the foundational theoretical aspects of our offline RL approach, focusing on the integration of behavior policies and the critical role of KL divergence in policy optimization. 

We begin by applying several lemmas—namely, the \emph{performance difference lemma}~\citep{agarwal2019reinforcement}, the \emph{state value estimation error}~\citep{lange2012batch}, the \emph{total variance and KL divergence relation}~\citep{csiszar2011information}, and the \emph{total variance and \(L_1\) norm relation}~\citep{IntroductiontoNonparametricEstimation}—which set the foundation for our analysis.

\begin{lemma}
\label{lemma1}
Let $\pi'$ and $\tilde{\pi}$ denote any two policies, and let $d_{\pi'}$ be the discounted state distribution induced by policy $\pi'$ over the state space $\mathcal{X}$. The following inequality holds:
\begin{equation}
\label{eq-lemma1}
\begin{aligned}
(1-\gamma)\left(V^{\pi'} - V^{\tilde{\pi}}\right) &= \mathbb{E}_{x \sim d_{\pi'}, u \sim \pi'(\cdot \mid x)}\left[A_{\tilde{\pi}}(x, u)\right] \\
&\leq \frac{2}{1-\gamma} \mathbb{E}_{x \sim d_{\pi'}}\left\|\pi'(\cdot \mid x) - \tilde{\pi}(\cdot \mid x)\right\|_1.
\end{aligned}
\end{equation}
\end{lemma}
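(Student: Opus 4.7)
The plan is to treat the stated display as two separate claims: first the equality (the classical performance difference identity of Kakade--Langford), and then the inequality (a Hölder/duality step against the $L_1$ policy distance), and handle them sequentially. Both parts are standard, so the proof will mainly amount to invoking well-known identities and justifying the constant $\tfrac{2}{1-\gamma}$ under the implicit reward normalization.

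For the equality, I would start from the telescoping identity
\begin{equation*}
V^{\pi'}(x_0) - V^{\tilde{\pi}}(x_0) \;=\; \mathbb{E}_{\pi'}\!\left[\sum_{t=0}^{\infty} \gamma^{t}\bigl(r(x_t,u_t) + \gamma V^{\tilde{\pi}}(x_{t+1}) - V^{\tilde{\pi}}(x_t)\bigr)\right],
\end{equation*}
where the trajectory is generated by rolling out $\pi'$. Recognizing the bracketed quantity as $A_{\tilde{\pi}}(x_t,u_t)$ after taking the conditional expectation over $x_{t+1}$, and then folding the discounted occupancies into the definition $d_{\pi'}(x) = (1-\gamma)\sum_{t}\gamma^{t}\Pr_{\pi'}(x_t = x)$, yields
\begin{equation*}
(1-\gamma)\bigl(V^{\pi'} - V^{\tilde{\pi}}\bigr) \;=\; \mathbb{E}_{x \sim d_{\pi'},\, u \sim \pi'(\cdot\mid x)}\!\left[A_{\tilde{\pi}}(x,u)\right].
\end{equation*}
This gives the first equality and isolates the advantage of the reference policy $\tilde{\pi}$ under the action distribution of $\pi'$.

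For the inequality, my plan is to exploit the zero-mean property of the advantage, namely $\mathbb{E}_{u \sim \tilde{\pi}(\cdot\mid x)}[A_{\tilde{\pi}}(x,u)] = 0$. Subtracting this identically-zero term inside the expectation and applying Hölder's inequality over actions gives, for every $x$,
\begin{equation*}
\mathbb{E}_{u \sim \pi'(\cdot\mid x)}[A_{\tilde{\pi}}(x,u)] \;=\; \sum_{u}\bigl(\pi'(u\mid x) - \tilde{\pi}(u\mid x)\bigr) A_{\tilde{\pi}}(x,u) \;\le\; \bigl\|\pi'(\cdot\mid x) - \tilde{\pi}(\cdot\mid x)\bigr\|_1 \cdot \bigl\|A_{\tilde{\pi}}(x,\cdot)\bigr\|_\infty.
\end{equation*}
Under the standard normalization $r \in [0,1]$ (implicit in the constant $\tfrac{2}{1-\gamma}$), both $V^{\tilde{\pi}}$ and $Q^{\tilde{\pi}}$ lie in $[0,\tfrac{1}{1-\gamma}]$, so $\|A_{\tilde{\pi}}\|_\infty \le \tfrac{2}{1-\gamma}$. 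Taking the outer expectation over $x \sim d_{\pi'}$ and chaining with the equality already proved delivers the stated bound.

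The main obstacle, such as it is, lies not in any deep step but in pinning down the uniform bound $\|A_{\tilde{\pi}}\|_\infty \le \tfrac{2}{1-\gamma}$; this is where the reward-scale convention enters. If rewards are instead taken in $[-R_{\max},R_{\max}]$, the constant inflates by a factor of $R_{\max}$, and I would state this explicitly as a remark so the lemma's use downstream (in bounding transferred-policy performance via the TS-induced policy discrepancy) remains rigorous under whatever reward convention the surrounding analysis adopts.
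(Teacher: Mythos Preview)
The paper does not supply its own proof of this lemma; it is stated as a known result and cited to Agarwal et al.\ (the performance difference lemma) together with the other foundational lemmas. Your argument is the standard one---telescoping to obtain the Kakade--Langford identity, then subtracting the zero-mean advantage under $\tilde{\pi}$ and applying H\"older with a uniform bound on $\|A_{\tilde{\pi}}\|_\infty$---and it is correct for the statement as written. One minor note: with rewards in $[0,1]$ the advantage actually satisfies the sharper bound $\|A_{\tilde{\pi}}\|_\infty \le \tfrac{1}{1-\gamma}$ (since $V$ is a convex combination of $Q$-values), which is why the paper's proof of Corollary~1 silently uses the constant $\tfrac{1}{1-\gamma}$ in front of the $L_1$ norm before converting to total variation; your looser $\tfrac{2}{1-\gamma}$ matches the lemma as stated and is of course still valid.
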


\begin{lemma}
\label{lemma2}
Let $V^\pi$ be the value function for any policy $\pi$, and let $\hat{V}^\pi$ be an estimate of $V^\pi$. Let $r$ and $\hat{r}$ be the true reward and its estimate, respectively, both bounded in $[0,1]$. Similarly, let $p$ and $\hat{p}$ be the true and estimated transition probabilities. Then, for any discount factor $\gamma \in[0,1)$, the following inequality holds:
\begin{equation}
\label{eq-lemma2}
\begin{aligned}
\left\|V^\pi-\hat{V}^\pi\right\|_{\infty} \leq \frac{1}{1-\gamma}\left(\|r-\hat{r}\|_{\infty}+\frac{\gamma}{1-\gamma}\|p-\hat{p}\|_{\infty}\right).
\end{aligned}
\end{equation}
\end{lemma}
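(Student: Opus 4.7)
The plan is a standard Bellman-contraction argument: write both $V^\pi$ and $\hat V^\pi$ as fixed points of their respective Bellman operators, subtract, and absorb the $\gamma\|V^\pi-\hat V^\pi\|_\infty$ term that appears on the right-hand side. Concretely, I would start from the two identities
\begin{align*}
V^\pi(x) &= \mathbb{E}_{u\sim\pi(\cdot\mid x)}\bigl[r(x,u) + \gamma\,\mathbb{E}_{x'\sim p(\cdot\mid x,u)}V^\pi(x')\bigr],\\
\hat V^\pi(x) &= \mathbb{E}_{u\sim\pi(\cdot\mid x)}\bigl[\hat r(x,u) + \gamma\,\mathbb{E}_{x'\sim \hat p(\cdot\mid x,u)}\hat V^\pi(x')\bigr],
\end{align*}
subtract, and use the add-and-subtract trick
$\mathbb{E}_{x'\sim p}[V^\pi(x')] - \mathbb{E}_{x'\sim \hat p}[\hat V^\pi(x')]
= \mathbb{E}_{x'\sim p}[V^\pi(x') - \hat V^\pi(x')] + \bigl(\mathbb{E}_{x'\sim p} - \mathbb{E}_{x'\sim \hat p}\bigr)\hat V^\pi(x')$.

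Next I would bound each piece separately. The reward difference contributes at most $\|r-\hat r\|_\infty$. The first transition piece, $\mathbb{E}_{x'\sim p}[V^\pi(x') - \hat V^\pi(x')]$, is bounded in magnitude by $\|V^\pi-\hat V^\pi\|_\infty$ and carries a factor $\gamma$. For the second transition piece I would invoke the Hölder-type inequality $\bigl|(\mathbb{E}_{p}-\mathbb{E}_{\hat p})\hat V^\pi\bigr| \le \|\hat V^\pi\|_\infty \cdot \|p(\cdot\mid x,u)-\hat p(\cdot\mid x,u)\|_1$, interpret the paper's $\|p-\hat p\|_\infty$ as $\sup_{x,u}\|p(\cdot\mid x,u)-\hat p(\cdot\mid x,u)\|_1$ (the only reading that makes the stated bound dimensionally correct), and use the standard reward-boundedness estimate $\|\hat V^\pi\|_\infty \le 1/(1-\gamma)$, which follows from $\hat r\in[0,1]$ and the geometric series.

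Combining these three bounds and taking the sup over $x$ gives
\begin{equation*}
\|V^\pi-\hat V^\pi\|_\infty \;\le\; \|r-\hat r\|_\infty + \gamma\|V^\pi-\hat V^\pi\|_\infty + \frac{\gamma}{1-\gamma}\,\|p-\hat p\|_\infty,
\end{equation*}
and rearranging (moving the $\gamma\|V^\pi-\hat V^\pi\|_\infty$ term to the left and dividing by $1-\gamma$) yields exactly the claimed inequality in \autoref{eq-lemma2}.

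The main obstacle is purely notational rather than mathematical: the paper writes $\|p-\hat p\|_\infty$, but the argument only closes if this is read as the uniform-in-$(x,u)$ total-variation (i.e.\ $L_1$) distance between the next-state distributions, not an entrywise sup. I would state this reading explicitly at the start of the proof and cite the total variance / $L_1$ norm relation mentioned in the preamble to justify it. A secondary subtlety is that $\|\hat V^\pi\|_\infty \le 1/(1-\gamma)$ requires $\hat r \in [0,1]$, which is given; without that bound, the factor $1/(1-\gamma)$ in front of $\|p-\hat p\|_\infty$ would have to be replaced by a generic value-range bound.
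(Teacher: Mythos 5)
Your argument is correct. Note that the paper itself gives no proof of this lemma --- it is stated as a known result on state value estimation error and cited to the literature --- so there is no in-paper derivation to compare against; your Bellman-fixed-point decomposition with the add-and-subtract trick, the H\"older bound $\bigl|(\mathbb{E}_{p}-\mathbb{E}_{\hat p})\hat V^\pi\bigr|\le \|\hat V^\pi\|_\infty\,\|p(\cdot\mid x,u)-\hat p(\cdot\mid x,u)\|_1$, the estimate $\|\hat V^\pi\|_\infty\le 1/(1-\gamma)$ from $\hat r\in[0,1]$, and the final rearrangement is exactly the standard simulation-lemma proof and closes the bound as stated. Your explicit caveat that $\|p-\hat p\|_{\infty}$ must be read as the uniform-in-$(x,u)$ $L_1$ (total-variation) distance between next-state distributions is the right one to flag, since the entrywise-sup reading would not yield the claimed inequality.
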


\begin{lemma}
\label{lemma3}
For any two probability distributions $P$ and $Q$, their total variance and KL divergence are bounded as follows, according to Pinsker’s inequality:
\begin{equation} 
\label{eq-lemma3}
{D_{\mathrm{TV}}}(P, Q) \leq \sqrt{\frac{1}{2} \infdiv{P}{Q}}.
\end{equation}
\end{lemma}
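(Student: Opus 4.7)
The plan is to prove Pinsker's inequality by the classical two-step reduction: first, reduce the general inequality between arbitrary distributions to the Bernoulli (two-point) case via the data-processing property of $\infdiv{\cdot}{\cdot}$, and second, establish the Bernoulli case by an elementary convexity argument.

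For the reduction, I would start from the variational characterization $D_{\mathrm{TV}}(P,Q) = \sup_{A} |P(A)-Q(A)|$, fix an arbitrary measurable set $A$, and set $p=P(A)$, $q=Q(A)$. Let $P_A$ and $Q_A$ denote the pushforward Bernoulli measures under the indicator map $\mathbf{1}_A$. By the data-processing inequality for KL divergence (equivalently, the log-sum inequality applied to the partition $\{A, A^c\}$),
\begin{equation*}
\infdiv{P}{Q} \;\geq\; \infdiv{P_A}{Q_A} \;=\; p\log\frac{p}{q} + (1-p)\log\frac{1-p}{1-q}.
\end{equation*}
Hence it suffices to prove the Bernoulli version, namely that $2(p-q)^2 \leq p\log(p/q)+(1-p)\log((1-p)/(1-q))$ for all $p,q\in(0,1)$; taking the supremum over $A$ on the left-hand side and then a square root yields the claim.

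For the Bernoulli case, fix $q\in(0,1)$ and define $f(p) \coloneqq p\log(p/q) + (1-p)\log((1-p)/(1-q)) - 2(p-q)^2$. A direct computation shows $f(q)=0$, $f'(q)=0$, and
\begin{equation*}
f''(p) \;=\; \frac{1}{p} + \frac{1}{1-p} - 4 \;=\; \frac{1}{p(1-p)} - 4 \;\geq\; 0,
\end{equation*}
where the last inequality uses the elementary bound $p(1-p)\leq 1/4$ valid on $[0,1]$. Therefore $f$ is convex on $(0,1)$ with a critical point at $p=q$, so $f(p)\geq f(q)=0$ for every $p\in(0,1)$, which is exactly the Bernoulli inequality.

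The main technical point is the convexity computation: the constant $4$ appearing in $f''$ is matched \emph{exactly} by the reciprocal of the maximum of $p(1-p)$, and this tight cancellation is precisely what produces the sharp constant $1/2$ inside the square root of Pinsker's inequality. The reduction step is essentially bookkeeping, though one should flag the convention $0\log 0 = 0$ and note that the case $P\not\ll Q$ is trivial because the right-hand side of the lemma is then infinite. Once these pieces are assembled, squaring and taking the supremum over $A$ completes the proof.
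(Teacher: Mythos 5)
Your proof is correct, but note that the paper does not actually prove \autoref{lemma3}: it states Pinsker's inequality as a known result and cites the information-theory literature, so there is no in-paper argument to compare against. What you have written is the standard self-contained derivation — reduce to the two-point case via the data-processing (log-sum) inequality applied to the partition $\{A, A^c\}$, then handle the Bernoulli case by showing $f(p) = p\log(p/q) + (1-p)\log((1-p)/(1-q)) - 2(p-q)^2$ is convex with a vanishing derivative at $p=q$ — and the details check out: $f''(p) = \tfrac{1}{p(1-p)} - 4 \geq 0$ is exactly the right computation, your remark that the constant $4$ is matched by $1/\max_p p(1-p)$ correctly identifies where the sharp factor $\tfrac{1}{2}$ comes from, and you handle the degenerate cases ($P \not\ll Q$, the $0\log 0$ convention) appropriately. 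One consistency point worth flagging: your reduction uses the variational form $D_{\mathrm{TV}}(P,Q) = \sup_A |P(A) - Q(A)|$, whereas the paper's \autoref{lemma4} works with $D_{\mathrm{TV}} = \tfrac{1}{2}\|P-Q\|_1$; these agree for probability measures, but a sentence noting that equivalence would make your proof mesh cleanly with how the lemma is consumed in \autoref{corollary2} and \autoref{theorem1}.
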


\begin{lemma}
\label{lemma4}
The total variance between any two probability distributions $P$ and $Q$ is equal to half the norm of the difference of $P$ and $Q$:
\begin{equation}
    \frac{1}{2} \| P - Q \|_1 = {D_{\mathrm{TV}}}(P, Q).
\end{equation}
\end{lemma}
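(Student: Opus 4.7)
The plan is to reduce the claimed identity to the elementary Hahn-style decomposition of the signed measure $P - Q$. I would begin from the standard definition $D_{\mathrm{TV}}(P,Q) = \sup_{A} |P(A) - Q(A)|$ over measurable events $A$, and introduce the distinguished event $A^{\ast} = \{x : p(x) \geq q(x)\}$, where $p$ and $q$ are densities of $P$ and $Q$ with respect to a common dominating measure (for instance $P + Q$, which always exists). Working with densities rather than summations keeps the argument valid for the continuous action distributions that appear elsewhere in the paper.

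The next step is to compute $P(A^{\ast}) - Q(A^{\ast})$ and $Q(A^{\ast c}) - P(A^{\ast c})$ separately. On $A^{\ast}$ the integrand $p - q$ is non-negative, so the first difference equals $\int_{A^{\ast}} (p - q)\, d\mu$; on $A^{\ast c}$ the integrand $q - p$ is non-negative, so the second equals $\int_{A^{\ast c}} (q - p)\, d\mu$. Because $P$ and $Q$ are probability measures, $P(A^{\ast}) + P(A^{\ast c}) = Q(A^{\ast}) + Q(A^{\ast c}) = 1$, which forces these two integrals to be equal. Adding them recovers $\int |p - q|\, d\mu = \|P - Q\|_1$, so each one is precisely $\tfrac{1}{2}\|P - Q\|_1$.

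Finally I would verify that $A^{\ast}$ actually attains the supremum in the definition of $D_{\mathrm{TV}}$. For any measurable $B$, $P(B) - Q(B) = \int_B (p - q)\, d\mu$ is maximized exactly when $B$ contains every point where $p \geq q$ and excludes every point where $p < q$, i.e., when $B = A^{\ast}$; the symmetric argument bounds the opposite orientation $Q(B) - P(B)$ by $\tfrac{1}{2}\|P - Q\|_1$ via the set $A^{\ast c}$. Combining these yields $D_{\mathrm{TV}}(P,Q) = P(A^{\ast}) - Q(A^{\ast}) = \tfrac{1}{2}\|P - Q\|_1$, which is the claimed equality.

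There is no real obstacle here beyond bookkeeping: the statement is a classical identity, and the only care needed is to phrase the decomposition in enough generality (via a common dominating measure and Radon--Nikodym densities) to cover the continuous-action policies used in the rest of the paper, rather than restricting to the purely discrete setting.
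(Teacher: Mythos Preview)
Your argument is correct and is precisely the standard Scheff\'e/Hahn-decomposition proof one finds in the reference the paper cites. The paper itself does not give a proof of this lemma at all: it states the identity with a citation to a nonparametric-estimation textbook and then simply uses it inside the proof of Corollary~\ref{corollary1}. So there is nothing to compare against beyond noting that you have supplied exactly the classical justification the paper defers to the literature; your care in passing to a common dominating measure and Radon--Nikodym densities is appropriate given that the policies $\pi(\cdot\mid x)$ appearing in Corollary~\ref{corollary1} and Theorem~\ref{theorem1} are continuous-action distributions.
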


\begin{corollary} 
\label{corollary2}
Let $p^{\mathcal{S}}$ and $p^{\mathcal{T}}$ represent the transition probability distributions of the source and target domains, respectively, and let $\pi$ be the policy we aim to train. The following inequality holds:
\begin{equation} 
\label{eq-corollary2}
\begin{aligned}
\left\|V_{p^{\mathcal{S}}}^{\pi}-V_{p^{\mathcal{T}}}^{\pi}\right\|_{\infty} \leq \frac{\gamma}{(1-\gamma)^2} \sqrt{\frac{1}{2} \infdiv{p^{\mathcal{S}}}{p^{\mathcal{T}}}}.
\end{aligned}
\end{equation}
\end{corollary}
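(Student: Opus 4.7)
The plan is to chain the three preceding lemmas in sequence: Lemma~\ref{lemma2} converts a gap in value functions into a gap in the model (rewards and transitions), Lemma~\ref{lemma4} rewrites a transition-model gap as a total variation, and Lemma~\ref{lemma3} (Pinsker's inequality) upgrades total variation to KL divergence. Since the statement of Corollary~\ref{corollary2} compares two value functions of the \emph{same} policy $\pi$ evaluated under two different transition kernels $p^{\mathcal{S}}$ and $p^{\mathcal{T}}$, the reward function plays no role and should drop out of the bound.

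First I would instantiate Lemma~\ref{lemma2} with $V^{\pi} = V_{p^{\mathcal{S}}}^{\pi}$ and $\hat{V}^{\pi} = V_{p^{\mathcal{T}}}^{\pi}$, viewing the two value functions as solutions of Bellman equations that share the reward $r$ but use different transition kernels. Setting $r = \hat{r}$ eliminates the reward term $\|r - \hat{r}\|_{\infty}$, and what remains is the clean inequality
\begin{equation*}
\left\|V_{p^{\mathcal{S}}}^{\pi} - V_{p^{\mathcal{T}}}^{\pi}\right\|_{\infty} \;\leq\; \frac{\gamma}{(1-\gamma)^2}\,\left\|p^{\mathcal{S}} - p^{\mathcal{T}}\right\|_{\infty},
\end{equation*}
where $\|\cdot\|_{\infty}$ is read as $\sup_{x,u}\|p^{\mathcal{S}}(\cdot\mid x,u) - p^{\mathcal{T}}(\cdot\mid x,u)\|_{1}$, consistent with the standard convention for model-error bounds in tabular offline RL.

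Next I would translate this $L_{1}$-type transition error into a KL-based quantity. By Lemma~\ref{lemma4}, for every fixed $(x,u)$ the quantity $\tfrac{1}{2}\|p^{\mathcal{S}}(\cdot\mid x,u) - p^{\mathcal{T}}(\cdot\mid x,u)\|_{1}$ equals $D_{\mathrm{TV}}(p^{\mathcal{S}}(\cdot\mid x,u), p^{\mathcal{T}}(\cdot\mid x,u))$. Applying Pinsker's inequality from Lemma~\ref{lemma3} pointwise and then taking the worst case over $(x,u)$ yields $\|p^{\mathcal{S}} - p^{\mathcal{T}}\|_{\infty} \leq \sqrt{\tfrac{1}{2}\,\infdiv{p^{\mathcal{S}}}{p^{\mathcal{T}}}}$, with the convention that $\infdiv{p^{\mathcal{S}}}{p^{\mathcal{T}}}$ is interpreted as the supremum over state--action pairs of the conditional KL. Substituting this bound into the display above gives exactly the claim of Corollary~\ref{corollary2}.

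The main obstacle is not an algebraic one but a notational one: Lemma~\ref{lemma2} is written with a single symbol $\|p-\hat p\|_{\infty}$ while $p, \hat p$ are in fact families of conditional distributions, so I need to make the identification between that symbol, the $L_{1}$ object appearing in Lemma~\ref{lemma4}, and the total-variation object appearing in Lemma~\ref{lemma3} rigorous and consistent. Once the pointwise-in-$(x,u)$ reading is fixed, taking suprema commutes with Pinsker's inequality because the square root is monotone, and the three inequalities chain together without loss; the remaining step is simply to collect the $\frac{\gamma}{(1-\gamma)^{2}}$ prefactor from Lemma~\ref{lemma2} with the $\sqrt{\tfrac{1}{2}\,\cdot}$ factor from Pinsker's inequality.
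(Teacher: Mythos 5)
Your proposal is correct and follows essentially the same route as the paper: instantiate Lemma~\ref{lemma2} with a shared reward to kill the $\|r-\hat r\|_{\infty}$ term, then upgrade the transition-model gap to a KL bound via Pinsker's inequality and substitute. The only difference is that you explicitly route through Lemma~\ref{lemma4} to reconcile the $\|p^{\mathcal{S}}-p^{\mathcal{T}}\|_{\infty}$ notation with total variation before applying Pinsker pointwise, which is a more careful handling of a step the paper's proof performs implicitly.
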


\begin{proof}
We begin by applying the result from \autoref{lemma2}, noting that the source and target domains share the same reward function. Thus, we have:
\begin{equation}
\begin{aligned}
\left\| V_{p^{\mathcal{S}}}^{\pi} - V_{p^{\mathcal{T}}}^{\pi} \right\|_{\infty} \leq \frac{\gamma}{(1-\gamma)^2} \left\| p^{\mathcal{S}} - p^{\mathcal{T}} \right\|_{\infty}.
\end{aligned}
\end{equation}
To further refine this bound, we apply Pinsker's inequality (\autoref{lemma3}) which provides:
\begin{equation}
\left\| p^{\mathcal{S}} - p^{\mathcal{T}} \right\|_{\infty} \leq \sqrt{\frac{1}{2} \infdiv{p^{\mathcal{S}}}{p^{\mathcal{T}}}}.
\end{equation}
Substituting the result from Pinsker's inequality into the inequality from \autoref{lemma2}, we obtain:
\begin{equation}
\left\| V_{p^{\mathcal{S}}}^{\pi} - V_{p^{\mathcal{T}}}^{\pi} \right\|_{\infty} \leq \frac{\gamma}{(1-\gamma)^2} \sqrt{\frac{1}{2} \infdiv{p^{\mathcal{S}}}{p^{\mathcal{T}}}}.
\end{equation}
\end{proof}

\setcounter{theorem}{0}
\begin{theorem} 
\label{theorem2}
Let \( d(s) \) be a distribution over states. If the condition $\mathbb{E}_{s \sim d(s)} \left[ \sqrt{\frac{1}{2} \infdiv{p^{\mathcal{S}}}{p^{\mathcal{T}}}} \right] \leq \epsilon$ is satisfied, the expected value of the policy \(\pi\) in the target domain can be bounded below by:
\begin{equation} 
\label{eq-theorem2}
\mathbb{E}_{s \sim d(s)} \left[ V_{p^{\mathcal{T}}}^{\pi} \right] \geq \mathbb{E}_{s \sim d(s)} \left[ V_{p^{\mathcal{S}}}^{\pi} \right] - \frac{\gamma \epsilon}{(1-\gamma)^2}.
\end{equation}
\end{theorem}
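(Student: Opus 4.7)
The plan is to derive \autoref{theorem2} as a direct consequence of \autoref{corollary2} together with the hypothesis on the expected square-root KL divergence. First, I would observe that \autoref{corollary2} provides a uniform (infinity-norm) bound on the gap between the source-domain value function $V_{p^{\mathcal{S}}}^{\pi}$ and the target-domain value function $V_{p^{\mathcal{T}}}^{\pi}$, and in particular that it yields a pointwise one-sided bound of the form
\begin{equation*}
V_{p^{\mathcal{S}}}^{\pi}(s) - V_{p^{\mathcal{T}}}^{\pi}(s) \le \frac{\gamma}{(1-\gamma)^2}\sqrt{\tfrac{1}{2}\,\infdiv{p^{\mathcal{S}}}{p^{\mathcal{T}}}}
\end{equation*}
for every state $s$ in the support of $d$.

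Second, I would take expectation with respect to $s \sim d(s)$ on both sides. By linearity of expectation on the left and monotonicity of expectation on the right, this gives
\begin{equation*}
\mathbb{E}_{s\sim d(s)}\!\left[V_{p^{\mathcal{S}}}^{\pi}\right] - \mathbb{E}_{s\sim d(s)}\!\left[V_{p^{\mathcal{T}}}^{\pi}\right] \le \frac{\gamma}{(1-\gamma)^2}\,\mathbb{E}_{s\sim d(s)}\!\left[\sqrt{\tfrac{1}{2}\,\infdiv{p^{\mathcal{S}}}{p^{\mathcal{T}}}}\right].
\end{equation*}
Third, I would invoke the hypothesis $\mathbb{E}_{s\sim d(s)}[\sqrt{\tfrac{1}{2}\,\infdiv{p^{\mathcal{S}}}{p^{\mathcal{T}}}}] \le \epsilon$ to replace the right-hand side by $\gamma\epsilon/(1-\gamma)^2$, and then rearrange to obtain the claimed lower bound
\begin{equation*}
\mathbb{E}_{s\sim d(s)}\!\left[V_{p^{\mathcal{T}}}^{\pi}\right] \ge \mathbb{E}_{s\sim d(s)}\!\left[V_{p^{\mathcal{S}}}^{\pi}\right] - \frac{\gamma\epsilon}{(1-\gamma)^2}.
\end{equation*}

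The argument is essentially a one-step consequence of \autoref{corollary2}, so there is no serious technical obstacle. The only subtlety worth being explicit about is the interpretation of the KL term: although \autoref{corollary2} is written with an infinity norm and a single KL symbol, the theorem statement treats $\sqrt{\tfrac12\,\infdiv{p^{\mathcal{S}}}{p^{\mathcal{T}}}}$ as a state-dependent quantity inside an expectation. I would therefore take care to state the pointwise form of \autoref{corollary2} (valid for each $s$ using the per-state transition kernels $p^{\mathcal{S}}(\cdot\mid s,\cdot)$ and $p^{\mathcal{T}}(\cdot\mid s,\cdot)$) before taking the expectation, so that the passage from the uniform bound to the averaged bound is rigorous. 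Once that identification is made, the remaining steps are just monotonicity of expectation and algebra.
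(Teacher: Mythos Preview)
Your proposal is correct and follows essentially the same route as the paper: take the infinity-norm bound from \autoref{corollary2}, pass to a one-sided pointwise inequality, take expectation over $s\sim d(s)$, apply the hypothesis, and rearrange. If anything, you are slightly more explicit than the paper about interpreting the KL term as state-dependent before averaging.
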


\begin{proof}
We begin by applying the expectation operator to \autoref{corollary2}, which provides:
\begin{equation} 
\label{eq1-theorem2-proof}
\resizebox{0.9\hsize}{!}{$
\mathbb{E}_{s \sim d(s)} \left[ \left\| V_{p^{\mathcal{S}}}^{\pi} - V_{p^{\mathcal{T}}}^{\pi} \right\|_{\infty} \right] \leq \frac{\gamma}{(1-\gamma)^2} \mathbb{E}_{s \sim d(s)} \left[ \sqrt{\frac{1}{2}\infdiv{p^{\mathcal{S}}}{p^{\mathcal{T}}}} \right].
$}
\end{equation}
By the non-negativity of expectations, we have:
\begin{equation} 
\label{eq2-theorem2-proof}
\mathbb{E}_{s \sim d(s)} \left[ V_{p^{\mathcal{S}}}^{\pi} - V_{p^{\mathcal{T}}}^{\pi} \right] \leq \mathbb{E}_{s \sim d(s)} \left[ \left\| V_{p^{\mathcal{S}}}^{\pi} - V_{p^{\mathcal{T}}}^{\pi} \right\|_{\infty} \right].
\end{equation}
Substituting the condition into \autoref{eq1-theorem2-proof}, we obtain:
\begin{equation} 
\label{eq3-theorem2-proof}
\mathbb{E}_{s \sim d(s)} \left[ V_{p^{\mathcal{S}}}^{\pi} - V_{p^{\mathcal{T}}}^{\pi} \right] \leq \frac{\gamma \epsilon}{(1-\gamma)^2}.
\end{equation}
Consequently, this yields the lower bound:
\begin{equation} 
\label{eq4-theorem2-proof}
\mathbb{E}_{s \sim d(s)} \left[ V_{p^{\mathcal{T}}}^{\pi} \right] \geq \mathbb{E}_{s \sim d(s)} \left[ V_{p^{\mathcal{S}}}^{\pi} \right] - \frac{\gamma \epsilon}{(1-\gamma)^2}.
\end{equation}
\end{proof}

\begin{corollary}
\label{corollary1}
Let \(\pi_b\) denote the behavior policy, which induces a distribution \(d_{\pi_b}\) over the state space \(\mathcal{X}\). For the current policy $\pi_i$, and the policy in the next episode $\pi_{i+1}$, the following relationship is valid:
\begin{equation} 
\label{eq-corollary1}
\begin{aligned}
& \mathbb{E}_{x \sim d_{\pi_b}, u \sim \pi_b(\cdot \mid x)} \bigg[ A^{\pi_i}(x, u) \bigg] \\
& \quad\; -\frac{2}{1-\gamma} \mathbb{E}_{x \sim d_{\pi_b}} D_{\mathrm{TV}}\left(\pi_b^{}(\cdot \mid x), \pi_{i+1}(\cdot \mid x)\right) \\
& \leq (1-\gamma)\left(V^{\pi_{i+1}}-V^{\pi_i}\right).
\end{aligned}
\end{equation}
\end{corollary}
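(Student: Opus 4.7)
The plan is to introduce the behavior-policy value \(V^{\pi_b}\) as an intermediate reference and split the improvement \(V^{\pi_{i+1}} - V^{\pi_i}\) into two terms so that one matches the \emph{equality} branch of \autoref{lemma1} and the other matches the \emph{inequality} branch. This exposes the on-policy-under-\(\pi_b\) advantage of the current iterate while paying only a divergence penalty for how far the next iterate \(\pi_{i+1}\) strays from the behavior policy on states drawn from \(d_{\pi_b}\), which is exactly the data-accessible quantity an offline learner can estimate.

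First I would write the tautological identity
\begin{equation*}
(1-\gamma)\bigl(V^{\pi_{i+1}} - V^{\pi_i}\bigr) = (1-\gamma)\bigl(V^{\pi_b} - V^{\pi_i}\bigr) - (1-\gamma)\bigl(V^{\pi_b} - V^{\pi_{i+1}}\bigr).
\end{equation*}
Next, applying the equality in \autoref{lemma1} with \(\pi' = \pi_b\) and \(\tilde{\pi} = \pi_i\) rewrites the first summand exactly as \(\mathbb{E}_{x \sim d_{\pi_b},\, u \sim \pi_b(\cdot|x)}\bigl[A^{\pi_i}(x,u)\bigr]\). Then, applying the inequality in \autoref{lemma1} with \(\pi' = \pi_b\) and \(\tilde{\pi} = \pi_{i+1}\) upper-bounds the second summand by \(\frac{2}{1-\gamma}\,\mathbb{E}_{x \sim d_{\pi_b}}\|\pi_b(\cdot|x) - \pi_{i+1}(\cdot|x)\|_1\). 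Because this second piece enters with a minus sign, its \emph{upper} bound becomes a \emph{lower} bound on the full expression. A final substitution via \autoref{lemma4} rewrites the \(L_1\) norm in terms of \(D_{\mathrm{TV}}\bigl(\pi_b(\cdot|x), \pi_{i+1}(\cdot|x)\bigr)\), producing the penalty form stated in \autoref{eq-corollary1}.

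The main obstacle I anticipate is bookkeeping for the constant in front of the total variation term. Because \autoref{lemma4} yields \(\|P - Q\|_1 = 2\, D_{\mathrm{TV}}(P, Q)\), a direct chain of substitutions naively produces a coefficient of \(\frac{4}{1-\gamma}\), whereas \autoref{eq-corollary1} advertises \(\frac{2}{1-\gamma}\). Matching the stated constant requires using the tighter form of the performance-difference upper bound (obtained, for rewards in \([0,1]\), from \(\|Q^{\tilde{\pi}}\|_\infty \leq \frac{1}{1-\gamma}\) together with the identity \(\mathbb{E}_{u \sim \tilde{\pi}}[A^{\tilde{\pi}}(x,u)] = 0\)), which gives \(\frac{1}{1-\gamma}\,\mathbb{E}_{x \sim d_{\pi_b}}\|\pi_b - \pi_{i+1}\|_1\) and then exactly \(\frac{2}{1-\gamma}\,\mathbb{E}_{x \sim d_{\pi_b}} D_{\mathrm{TV}}(\pi_b, \pi_{i+1})\) after \autoref{lemma4}. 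Beyond this constant-tracking step, the corollary is an immediate two-line consequence of the performance-difference lemma applied twice through the common anchor \(\pi_b\).
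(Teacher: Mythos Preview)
Your proposal is correct and mirrors the paper's proof exactly: both anchor at \(V^{\pi_b}\), apply the equality branch of \autoref{lemma1} with \((\pi',\tilde\pi)=(\pi_b,\pi_i)\), the inequality branch with \((\pi',\tilde\pi)=(\pi_b,\pi_{i+1})\), subtract, and then convert the \(L_1\) norm to \(D_{\mathrm{TV}}\) via \autoref{lemma4}. Your explicit discussion of the constant is in fact more careful than the paper's own proof, which silently writes \(\tfrac{1}{1-\gamma}\) in place of the \(\tfrac{2}{1-\gamma}\) appearing in \autoref{lemma1} before invoking \autoref{lemma4}.
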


\begin{proof}
By substituting $\pi^*$ with $\pi_{b}$ and $\tilde{\pi}$ with $\pi_{i+1}$ in \autoref{eq-lemma1}, we obtain the first part of the inequality. 
\begin{equation}
\resizebox{0.9\hsize}{!}{$
(1-\gamma)(V^{\pi_b}-V^{{\pi_{i+1}}}) \leq \frac{1}{1-\gamma} \mathbb{E}_{x \sim d_{\pi_b}} \| {\pi_b}(\cdot \mid x)-{\pi_{i+1}}(\cdot \mid x) \|_1
$}
\label{eq1-corollary-proof}
\end{equation}

Similarly, by substituting $\pi^*$ with $\pi_{b}$ and $\tilde{\pi}$ with $\pi_{i}$ in \autoref{eq-lemma1}, we derive \autoref{eq2-corollary-proof}, which represents the second part of the inequality in \autoref{eq-lemma1}.
\begin{equation}
\resizebox{0.8\hsize}{!}{$
(1-\gamma)(V^{\pi_b}-V^{{\pi_{i}}}) = \mathbb{E}_{x \sim d_{\pi_b}, u \sim \pi_b(\cdot \mid x)} \bigg[ A^{\pi_i}(x, u) \bigg]
$}
\label{eq2-corollary-proof}
\end{equation}

By applying \autoref{eq1-corollary-proof} and \autoref{eq2-corollary-proof}, we derive:
\begin{equation}
\resizebox{0.9\hsize}{!}{$
\begin{aligned}
(1-\gamma)(V^{\pi_{i+1}}-V^{{\pi_{i}}}) & \geq \mathbb{E}_{x \sim d_{\pi_b}, u \sim \pi_b(\cdot \mid x)} \bigg[ A^{\pi_i}(x, u) \bigg] \\
& \quad\; -\frac{1}{1-\gamma} \mathbb{E}_{x \sim d_{\pi_b}} \| {\pi_b}(\cdot \mid x)-{\pi_{i+1}}(\cdot \mid x) \|_1
\end{aligned}
$}
\label{eq3-corollary-proof}
\end{equation}

Utilizing \autoref{lemma4}, we derive:
\begin{equation}
\resizebox{0.9\hsize}{!}{$
\begin{aligned}
(1-\gamma)(V^{\pi_{i+1}}-V^{{\pi_{i}}}) & \geq \mathbb{E}_{x \sim d_{\pi_b}, u \sim \pi_b(\cdot \mid x)} \bigg[ A^{\pi_i}(x, u) \bigg] \\
& \quad\; -\frac{2}{1-\gamma} \mathbb{E}_{x \sim d_{\pi_b}} D_{\mathrm{TV}}\left(\pi_b^{}(\cdot \mid x), \pi_{i+1}(\cdot \mid x)\right)
\end{aligned}
$}
\label{eq4-corollary-proof}
\end{equation}
\end{proof}

\begin{theorem}
\label{theorem1}
The KL divergence between the behavior policy $\pi_b$ and the policy in the subsequent episode $\pi_{i+1}$ establishes a lower bound for policy improvement:
\begin{equation} 
\label{eq-theorem1}
\resizebox{0.7\hsize}{!}{$
\begin{aligned}
& (1-\gamma)\left(V^{\pi_{i+1}}-V^{\pi_i}\right) \\
& \geq \mathbb{E}_{x \sim d_{\pi_b}, u \sim \pi_b(\cdot \mid x)} \bigg[ A^{\pi_i}(x, u) \bigg] \\
& \quad\; -\frac{\sqrt{2}}{1-\gamma} \mathbb{E}_{x \sim d_{\pi_b}} \sqrt{\infdiv{\pi_b(\cdot \mid x)}{\pi_{i+1}(\cdot \mid x)}}.
\end{aligned}
$}
\end{equation}
\end{theorem}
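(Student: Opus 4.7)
The plan is to obtain Theorem \ref{theorem1} as a direct consequence of Corollary \ref{corollary1} by invoking Pinsker's inequality (Lemma \ref{lemma3}) to convert the total variation term into a square-root KL divergence term. Since Corollary \ref{corollary1} already delivers a lower bound on $(1-\gamma)(V^{\pi_{i+1}} - V^{\pi_i})$ in terms of $D_{\mathrm{TV}}(\pi_b(\cdot \mid x), \pi_{i+1}(\cdot \mid x))$, essentially all the work has been done; what remains is a clean relaxation of the TV distance to a KL-based quantity.

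My first step is to recall the statement of Corollary \ref{corollary1}, which gives
\[
(1-\gamma)(V^{\pi_{i+1}} - V^{\pi_i}) \geq \mathbb{E}_{x \sim d_{\pi_b}, u \sim \pi_b(\cdot \mid x)}\bigl[A^{\pi_i}(x,u)\bigr] - \frac{2}{1-\gamma} \mathbb{E}_{x \sim d_{\pi_b}} D_{\mathrm{TV}}\bigl(\pi_b(\cdot \mid x), \pi_{i+1}(\cdot \mid x)\bigr).
\]
Next, I apply Pinsker's inequality pointwise in $x$: for every $x$ in the support of $d_{\pi_b}$,
\[
D_{\mathrm{TV}}\bigl(\pi_b(\cdot \mid x), \pi_{i+1}(\cdot \mid x)\bigr) \leq \sqrt{\tfrac{1}{2}\, \infdiv{\pi_b(\cdot \mid x)}{\pi_{i+1}(\cdot \mid x)}}.
\]
Taking expectation with respect to $x \sim d_{\pi_b}$ preserves the inequality (monotonicity of expectation), yielding
\[
\mathbb{E}_{x \sim d_{\pi_b}} D_{\mathrm{TV}}(\cdot,\cdot) \leq \tfrac{1}{\sqrt{2}}\, \mathbb{E}_{x \sim d_{\pi_b}} \sqrt{\infdiv{\pi_b(\cdot \mid x)}{\pi_{i+1}(\cdot \mid x)}}.
\]

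To finish, I substitute this bound into Corollary \ref{corollary1}, noting that multiplying by the negative factor $-\frac{2}{1-\gamma}$ flips the inequality in the appropriate direction, and that $\frac{2}{1-\gamma} \cdot \frac{1}{\sqrt{2}} = \frac{\sqrt{2}}{1-\gamma}$, which exactly reproduces the constant in the target inequality. I do not anticipate a genuine obstacle: the only point requiring a moment of care is the sign when combining the inequalities (the TV term enters with a negative coefficient, so an upper bound on TV yields a lower bound on the right-hand side), and the constant bookkeeping to check that $2 / \sqrt{2} = \sqrt{2}$. No Jensen-type step is required since Pinsker is applied pointwise before taking the expectation, so we do not need to move a square root through an expectation.
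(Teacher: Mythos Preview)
Your proposal is correct and follows essentially the same approach as the paper: start from Corollary~\ref{corollary1}, apply Pinsker's inequality (Lemma~\ref{lemma3}) pointwise in $x$ to replace the total variation term by $\sqrt{\tfrac{1}{2}\infdiv{\pi_b(\cdot\mid x)}{\pi_{i+1}(\cdot\mid x)}}$, and simplify the constant $\tfrac{2}{1-\gamma}\cdot\tfrac{1}{\sqrt{2}}=\tfrac{\sqrt{2}}{1-\gamma}$. Your explicit remark about the sign flip and the pointwise (rather than post-expectation) application of Pinsker is a nice clarification but does not depart from the paper's argument.
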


\begin{proof}
In \autoref{corollary1}, we derived bounds for policy improvement by introducing the total variation distance. To relate total variation to KL divergence, we utilize \autoref{lemma3}, which gives us Pinsker's inequality:

\begin{equation}
\label{eq1-theorem1-proof}
\resizebox{0.48\hsize}{!}{
\begin{math}
\begin{aligned}
    {D_{\mathrm{TV}}}(P, Q) & \leq \sqrt{\frac{1}{2} \infdiv{P}{Q}}.
\end{aligned}
\end{math}
}
\end{equation}

Next, we substitute the total variation distance in \autoref{eq4-corollary-proof} with the KL divergence bound using \autoref{eq1-theorem1-proof}:
\begin{equation}
\resizebox{0.9\hsize}{!}{$
\begin{aligned} 
\label{eq2-theorem1-proof}
(1-\gamma)(V^{\pi_{i+1}}-V^{{\pi_{i}}}) & \geq \mathbb{E}_{x \sim d_{\pi_b}, u \sim \pi_b(\cdot \mid x)} \bigg[ A^{\pi_i}(x, u) \bigg] \\
& \quad\; -\frac{2}{1-\gamma} \mathbb{E}_{x \sim d_{\pi_b}} D_{\mathrm{TV}}\left(\pi_b(\cdot \mid x), \pi_{i+1}(\cdot \mid x)\right) \\
& \geq \mathbb{E}_{x \sim d_{\pi_b}, u \sim \pi_b(\cdot \mid x)} \bigg[ A^{\pi_i}(x, u) \bigg] \\
& \quad\; -\frac{2}{1-\gamma} \mathbb{E}_{x \sim d_{\pi_b}} \sqrt{\frac{1}{2}\infdiv{\pi_b(\cdot \mid x)}{\pi_{i+1}(\cdot \mid x)}} \\
& = \mathbb{E}_{x \sim d_{\pi_b}, u \sim \pi_b(\cdot \mid x)} \bigg[ A^{\pi_i}(x, u) \bigg] \\
& \quad\; -\frac{\sqrt{2}}{1-\gamma} \mathbb{E}_{x \sim d_{\pi_b}} \sqrt{\infdiv{\pi_b(\cdot \mid x)}{\pi_{i+1}(\cdot \mid x)}}.
\end{aligned}
$}
\end{equation}
\end{proof}

\section{Experiments}
\label{sec:experiments}
\textbf{Datasets.}  We used D4RL~\citep{fu2020d4rl} datasets to create three mixed datasets for each of the considered MuJoCo~\citep{todorov2012mujoco} domains: Ant, HalfCheetah, Hopper, and Walker2d. These datasets were created by mixing 90\% of transition items from the source datasets with 10\% from the target datasets. In D4RL, the \textit{medium} dataset is generated by first training a policy online using Soft Actor-Critic (SAC)~\citep{haarnoja2018soft}, early-stopping the training, and collecting 1M transition items from this partially trained policy. The \textit{random} datasets are generated by unrolling a randomly initialized policy in these domains. Similarly, the \textit{expert} datasets are generated by an expert policy~\citep{fu2020d4rl}.

\vspace{0.5em}
\noindent\textbf{Baselines.} We use CQL~\citep{kumar2020conservative} and IQL~\citep{kostrikov2021offline} as our base offline RL algorithms as they are the most widely used offline RL algorithms and have demonstrated good results in tasks similar to ours, making them reliable benchmarks~\citep{fujita2021chainerrl,seno2022d3rlpy,sun2023offlinerl}. 

We consider Vanilla (the base algorithm without any additional methods applied), CUORL~\citep{cai2023curriculum}, and Harness~\citep{hong2023harnessing} as our \textit{baselines}, applied on top of the base offline RL algorithms (CQL and IQL). 

\vspace{0.5em}
\noindent\textbf{Performance Metric.} We measure the performance of offline RL algorithms using a normalized score, as introduced in D4RL~\citep{fu2020d4rl}. The score is calculated as: \scalebox{0.85}{$100 \times \frac{\text {score} - \text{random score}}{\text {expert score} - \text{random score}}$}. This formula is used to normalize the scores for each domain, roughly scaling them to a range between 0 and 100. A normalized score of 0 corresponds to the average returns (over 100 episodes, with each episode containing 5,000 steps) of an agent that takes actions uniformly at random across the action space. A score of 100 corresponds to the average returns of a domain expert. Scores below 0 indicate performance worse than random, while scores above 100 indicate performance surpassing expert level.

\vspace{0.5em}
\noindent\textbf{Ablation Study.} The choice of reward function and the impact of similarity-reward trade-off parameters of our method (\(\lambda\) and \(\delta\)) are discussed in \autoref{rew-choice} and \autoref{par-impact}, respectively.

\vspace{0.5em}
\noindent\textbf{Implementation and Parameter Tuning.} The implementation details and hyperparameter tuning are presented in \autoref{ex-details}. Our code
is available at \href{https://github.com/amir-abolfazli/CLTV}{https://github.com/amir-abolfazli/CLTV}.

\subsection{Performance of Offline RL Methods}
\label{subsec-score}
\autoref{tab:normalized-score} provides a comprehensive evaluation of three methods (CUORL, Harness, and CLTV) alongside the Vanilla version, using two popular offline RL algorithms (CQL and IQL). The evaluation reports their performance, measured by normalized score, and standard deviations over 100 episodes and 5 seeds on mixed D4RL datasets. The highest performing scores are highlighted in blue, and the total scores of the second best performing method are marked in bold. Additionally, the percentage increase (PI) in score of CLTV, compared to the second best performing method, is reported. The runtime analysis is presented in \autoref{runtime-analysis}.

The results show that CLTV consistently outperforms the other methods (Vanilla, CUORL and Harness) across all domains and algorithms. For example, in the Ant domain using CQL, CLTV achieves a total score of 238.18, which is 95\% higher than the second-best method (Harness) with a score of 120.97. Similarly, in the IQL variant for the Walker2d domain, CLTV surpasses the second-best method by 28\%, scoring 268.62 against 208.80. Across the HalfCheetah and Hopper domains, CLTV demonstrates significant improvements, with percentage increases of 43\% and 62\% respectively, underscoring its superior performance as reflected by the relatively low standard deviations. Moreover, CLTV often highlights the highest individual dataset scores, indicating its robustness across different datasets within each domain.

\begin{table}[!ht]
\caption{Average normalized score along with standard deviation over 100 episodes and 5 seeds on mixed D4RL datasets.}
\label{tab:normalized-score}
\centering
\resizebox{\linewidth}{!}{
\renewcommand{\arraystretch}{1.4}
\begin{tabular}{@{}ccclllll@{}}
\toprule
\textbf{Domain} & \textbf{\shortstack{RL \\ Algorithm}} & \textbf{Method} & \multicolumn{3}{c}{\textbf{Dataset}} & \textbf{Total} & \textbf{PI} \\
\multicolumn{1}{l}{} & \multicolumn{1}{l}{} & \multicolumn{1}{l}{} & \multicolumn{1}{c}{\textbf{random-medium}} & \textbf{random-expert} & \textbf{medium-expert} &  &  \\
\hline
\multirow{9}{*}{\rotatebox{90}{\Large Ant}} & \multirow{4}{*}{CQL} & Vanilla & 51.99 ± 22.32  & 49.81 ± 25.85 & -6.25 ± 14.12 & 95.55 &  \\
 &  & CUORL & -8.37 ± 2.59 & -7.81 ± 4.61 & 6.32 ± 23.51  & -9.86 &  \\
 &  & Harness & 67.19 ± 6.47 & 38.30 ± 38.32 & 15.48 ± 22.20 & \textbf{120.97} &  \\
 &  & CLTV & \paddedcolorbox{LightCyan}{97.48} ± 3.63 & \paddedcolorbox{LightCyan}{115.86} ± 6.82 & \paddedcolorbox{LightCyan}{24.84} ± 13.70  & \paddedcolorbox{LightCyan}{238.18} & $\uparrow$ 95\% \\ \cmidrule{3-8}
 & \multirow{4}{*}{IQL} & Vanilla & \paddedcolorbox{LightCyan}{83.75} ± 7.18 & 82.96 ± 9.07 & \paddedcolorbox{LightCyan}{117.20} ± 4.39 & \textbf{283.91} &  \\
 &  & CUORL & 4.60 ± 1.18 & 4.66 ± 0.49  & 116.23 ± 2.50 & 125.49 &  \\
 &  & Harness & 74.45 ± 6.52 & 82.56 ± 8.26 & 110.57 ± 2.56 & 267.58 &  \\
 &  & CLTV & 78.67 ± 8.26 & \paddedcolorbox{LightCyan}{88.26} ± 4.67 & 117.00 ± 7.28 & \paddedcolorbox{LightCyan}{283.93} & $\uparrow$ 0\% \\ 
  \hline
 \multirow{9}{*}{\rotatebox{90}{\Large HalfCheetah}} & \multirow{4}{*}{CQL} & Vanilla & 35.15 ± 2.78 & -0.09 ± 0.75 & 29.17 ± 18.12 & 64.23 & \\
 &  & CUORL & -3.42 ± 0.46 & -3.47 ± 0.30 & 36.89 ± 16.87 & 30.00 &  \\
 &  & Harness & 34.79 ± 4.12 & 0.53 ± 1.41 & 44.19 ± 10.86 & \textbf{79.51} &  \\
 &  & CLTV & \paddedcolorbox{LightCyan}{44.13} ± 3.47 & \paddedcolorbox{LightCyan}{10.37} ± 2.51 & \paddedcolorbox{LightCyan}{59.88} ± 7.91 & \paddedcolorbox{LightCyan}{114.38} & $\uparrow$ 43\% \\ \cmidrule{3-8}
 & \multirow{4}{*}{IQL} & Vanilla & 37.78 ± 1.51 &  6.97 ± 2.33 & 58.05 ± 3.48 & \textbf{102.80} &  \\
 &  & CUORL & 2.97 ± 1.50 & 2.55 ± 1.40 & 60.53 ± 4.25 & 66.05 &  \\
 &  & Harness & 36.55 ± 2.78 & 8.37 ± 2.80 & 55.90 ± 3.22 & 100.82 &  \\
 &  & CLTV & \paddedcolorbox{LightCyan}{41.83} ± 0.63 & \paddedcolorbox{LightCyan}{16.28} ± 7.22 & \paddedcolorbox{LightCyan}{77.10} ± 5.16 & \paddedcolorbox{LightCyan}{135.21} & $\uparrow$ 31\%  \\ 
  \hline
\multirow{9}{*}{\rotatebox{90}{\Large Hopper}} & \multirow{4}{*}{CQL} & Vanilla & 18.93 ± 12.57 & 22.47 ± 23.90 & 75.90 ± 11.12 & \textbf{117.30} &  \\
 &  & CUORL & 0.83 ± 0.23 & 0.73 ± 0.15 & 71.36 ± 40.45 & 72.90 &  \\
 &  & Harness & 12.90 ± 14.26 & 22.63 ± 7.59 & 65.33 ± 29.68 & 100.86 &  \\
 &  & CLTV & \paddedcolorbox{LightCyan}{51.04} ± 3.21 & \paddedcolorbox{LightCyan}{56.23} ± 15.14 & \paddedcolorbox{LightCyan}{83.44} ± 16.95 & \paddedcolorbox{LightCyan}{190.71} & $\uparrow$ 62\% \\ \cmidrule{3-8}
 & \multirow{4}{*}{IQL} & Vanilla & 53.29 ± 2.71 & 28.30 ± 6.21 & 25.63 ± 16.76 & 107.22 &  \\
 &  & CUORL & 6.70 ± 3.80 & 14.91 ± 12.51 & 12.59 ± 10.06 & 34.20 &  \\
 &  & Harness & 55.17 ± 3.55 & 32.05 ± 9.78 & 24.79 ± 12.61 & \textbf{112.01} &  \\
 &  & CLTV & \paddedcolorbox{LightCyan}{55.79} ± 4.44 & \paddedcolorbox{LightCyan}{39.56} ± 2.78 & \paddedcolorbox{LightCyan}{70.73} ± 4.11 & \paddedcolorbox{LightCyan}{166.08} & $\uparrow$ 48\% \\ 
  \hline
 \multirow{9}{*}{\rotatebox{90}{\Large Walker2d}} & \multirow{4}{*}{CQL} & Vanilla & 27.55 ± 20.29 & 83.81 ± 19.68 & 2.94 ± 3.80 & \textbf{114.30} &  \\
 &  & CUORL & -0.01 ± 0.09 & -0.03 ± 0.07 & 8.22 ± 9.01 & 8.18 &  \\
 &  & Harness & 23.70 ± 15.89 & 87.04 ± 17.11 & 1.77 ± 1.35 & 112.51 &  \\
 &  & CLTV & \paddedcolorbox{LightCyan}{70.45} ± 8.72 & \paddedcolorbox{LightCyan}{97.43} ± 7.74 & \paddedcolorbox{LightCyan}{30.23} ± 41.25 & \paddedcolorbox{LightCyan}{198.11} & $\uparrow$ 73\% \\ \cmidrule{3-8}
 & \multirow{4}{*}{IQL} & Vanilla & 63.98 ± 5.90 & 45.67 ± 10.30 & 96.65 ± 5.87 & 206.30 &  \\
 &  & CUORL & 5.05 ± 1.94 & 3.58 ± 0.72 & 88.71 ± 5.00 & 97.34 &  \\
 &  & Harness & 68.20 ± 3.17 & 45.12 ± 14.86 & 95.48 ± 3.81 & \textbf{208.80} &  \\
 &  & CLTV & \paddedcolorbox{LightCyan}{68.37} ± 4.12 & \paddedcolorbox{LightCyan}{89.51} ± 9.03 & \paddedcolorbox{LightCyan}{110.74} ± 0.66 & \paddedcolorbox{LightCyan}{268.62} & $\uparrow$ 28\% \\ 
\bottomrule
\end{tabular}
}
\end{table}

\subsection{Impact of CL on Performance of CLTV}
\begin{figure}[!ht]
\center
\includegraphics[width=0.497\textwidth,keepaspectratio]{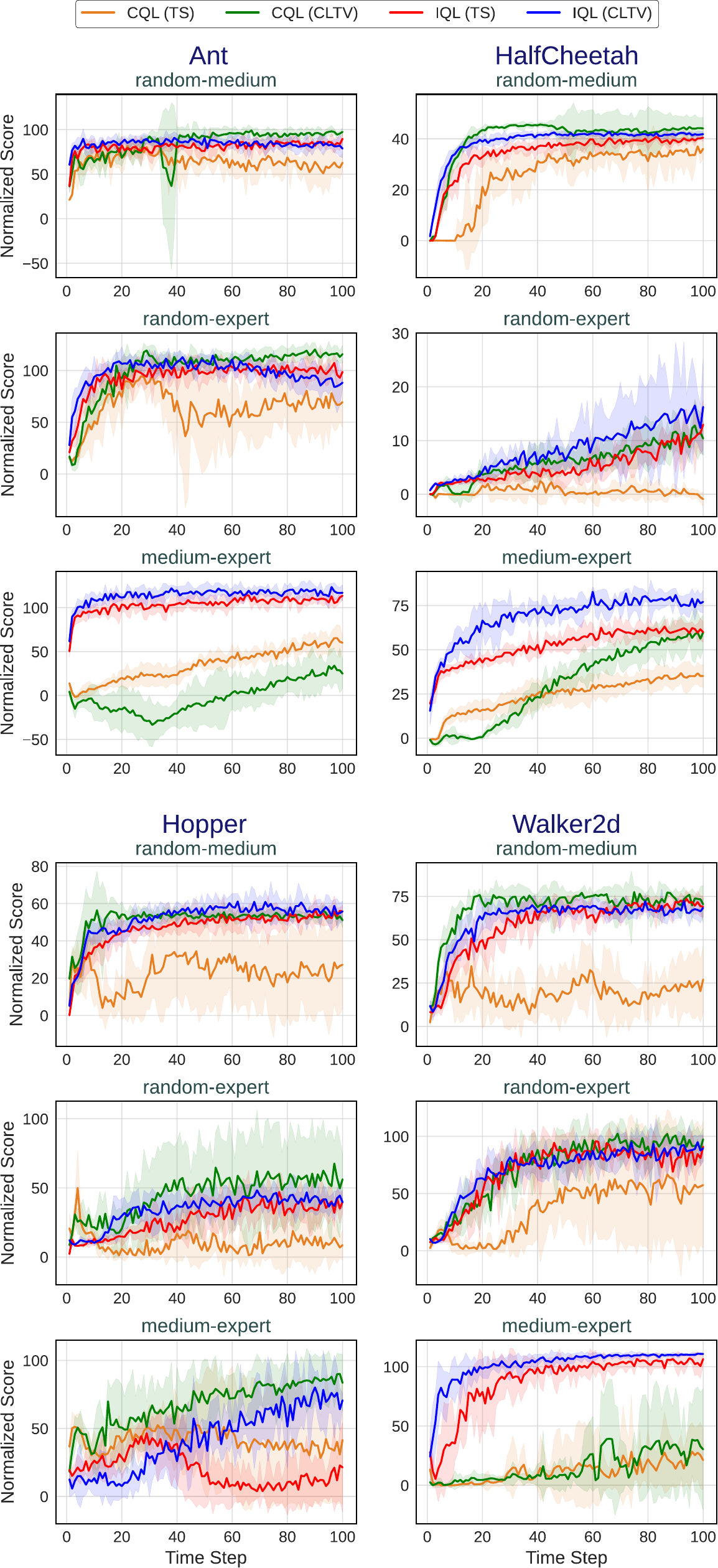}
\caption{Performance of TS method, compared with CLTV. The curves are averaged over 5 seeds, with the shaded areas showing the confidence interval across seeds.}
\label{fig-wuthoutCL}
\end{figure}

Our method consists of two key components: Transition Scoring (TS) and Curriculum Learning-Based Trajectory Valuation (CLTV). TS assigns scores to transitions and adjusts rewards, enabling the agent to focus on the transitions most relevant to the target domain. Building upon TS, CLTV integrates curriculum learning (CL) to prioritize high-quality trajectories during training.

\autoref{fig-wuthoutCL} illustrates the impact of CL on the performance of CLTV. The results demonstrate that while TS provides a strong foundation for improving agent performance, CLTV consistently achieves better results across various environments, including faster convergence and higher normalized scores. For example, in the Ant domain, CLTV achieves scores approaching or exceeding 100 in the random-medium and random-expert datasets, whereas TS reaches around 50. Similarly, in HalfCheetah, Hopper, and Walker2d, CLTV builds on the framework of TS to deliver competitive results, especially in the random-medium and medium-expert datasets. Additionally, CLTV tends to reduce variance and improve stability in some environments, as evidenced by the smoother learning curves for Walker2d and Ant.

Although performance improvements in HalfCheetah and Hopper are less pronounced, CLTV consistently surpasses TS in most cases, highlighting its robustness across a wide range of conditions.

In general, CLTV improves the learning performance of TS, particularly in settings with diverse data distributions. By prioritizing high-value trajectories, CLTV enables the agent to focus on relevant experiences, resulting in more efficient learning and better adaptability to variations in data quality.

In general, CLTV improves TS by prioritizing high-value trajectories, leading to faster convergence and improved stability. Its adaptability across diverse environments makes it a robust approach for optimizing trajectory valuation in offline RL.

\section{Conclusion}
\label{sec:conclusion}
In this work, we introduced Curriculum Learning-Based Trajectory Valuation (CLTV) to enhance the performance of offline RL algorithms when dealing with mixed datasets, characterized by a predominance of source data generated from random or suboptimal policies and a limited amount of target data from higher-quality policies. Our approach employs Transition Scoring (TS) to evaluate transition items based on their relevance to the target domain. By leveraging these scores, CLTV prioritizes high-quality trajectories through curriculum learning, enabling the agent to utilize valuable transitions from datasets generated by diverse policies. Experimental results across various algorithms and MuJoCo environments demonstrated that CLTV significantly improves performance and accelerates convergence. CLTV is particularly effective in batch-constrained scenarios, operating efficiently with limited data to ensure robust learning even when data is scarce. Furthermore, our theoretical analysis validated the effectiveness of our approach.

%%%%%%%%%%%%%%%%%%%%%%%%%%%%%%%%%%%%%%%%%%%%%%%%%%%%%%%%%%%%%%%%%%%%%%%%

\begin{acks}
This work was supported by the research projects ``QuBRA'' and ``BIFOLD'', funded by the Federal Ministry of Education and Research (BMBF) under grant IDs 13N16052 and BIFOLD24B, respectively.
\end{acks}

%%%%%%%%%%%%%%%%%%%%%%%%%%%%%%%%%%%%%%%%%%%%%%%%%%%%%%%%%%%%%%%%%%%%%%%%

\bibliographystyle{ACM-Reference-Format} 
\bibliography{references}

\clearpage
\newpage

\appendix
\section*{Appendix}
\section{Theoretical Analysis}
\label{derivation}
\subsection*{Derivation of \autoref{grad-eq}}
The gradient of the objective function \( J\left(\pi_\phi\right) \) (\autoref{objective}) with respect to the policy parameters \( \phi \) is given by:

\begin{equation}
\resizebox{0.8\hsize}{!}{%
$\displaystyle
\begin{aligned}
\nabla_{\phi} J\left(\pi_{\phi}\right) &= \int P^{\mathcal{S}}\left(x^{\mathcal{S}}, u^{\mathcal{S}}, x^{\prime \mathcal{S}}\right) \Bigg[ \sum_{w \in [0,1]^{N}} \nabla_{\phi} \pi_{\phi}(\mathcal{D_S}, w) \cdot r_\phi\left(x^{\mathcal{S}}, u^{\mathcal{S}}, x^{\prime \mathcal{S}}, \Delta_\theta\right) \\
&\quad + \sum_{w \in [0,1]^{N}} \pi_{\phi}(\mathcal{D_S}, w) \cdot \nabla_{\phi} r_\phi\left(x^{\mathcal{S}}, u^{\mathcal{S}}, x^{\prime \mathcal{S}}, \Delta_\theta\right) \Bigg] \, d\left(x^{\mathcal{S}}, u^{\mathcal{S}}, x^{\prime \mathcal{S}}\right),
\end{aligned}$
}
\end{equation}
% }

where \( P^{\mathcal{S}}\left(x^{\mathcal{S}}, u^{\mathcal{S}}, x^{\prime \mathcal{S}}\right) \) represents the probability distribution over state--action--next-state triples, \( \pi_{\phi}(\mathcal{D_S}, w) \) is the policy parameterized by \( \phi \), and \( r_{\phi} \) is the reward function.

We apply the log-derivative trick to the policy gradient:

\begin{equation}
\resizebox{0.6\hsize}{!}{%
$\displaystyle
\nabla_{\phi} \pi_{\phi}(\mathcal{D_S}, w) = \pi_{\phi}(\mathcal{D_S}, w) \nabla_{\phi} \log \pi_{\phi}(\mathcal{D_S}, w).
$
}
\end{equation}

Substituting this into the integral, we obtain:

\begin{equation}
\resizebox{0.98\hsize}{!}{%
$\displaystyle
\begin{aligned}
\nabla_{\phi} J(\pi_{\phi}) &= \int P^{\mathcal{S}}\left(x^{\mathcal{S}}, u^{\mathcal{S}}, x^{\prime \mathcal{S}}\right) \Bigg[ \sum_{w \in [0,1]^{N}} \pi_{\phi}(\mathcal{D_S}, w) \nabla_{\phi} \log \pi_{\phi}(\mathcal{D_S}, w) \cdot r_\phi\left(x^{\mathcal{S}}, u^{\mathcal{S}}, x^{\prime \mathcal{S}}, \Delta_\theta\right) \\
&\quad + \sum_{w \in [0,1]^{N}} \pi_{\phi}(\mathcal{D_S}, w) \cdot \nabla_{\phi} r_\phi\left(x^{\mathcal{S}}, u^{\mathcal{S}}, x^{\prime \mathcal{S}}, \Delta_\theta\right) \Bigg] \, d\left(x^{\mathcal{S}}, u^{\mathcal{S}}, x^{\prime \mathcal{S}}\right).
\end{aligned}$
}
\end{equation}

The sum over \( w \) can be interpreted as an expectation with respect to the policy distribution \( \pi_{\phi} \). Therefore, we simplify the expression to:

\begin{equation}
\resizebox{0.9\hsize}{!}{%
$\displaystyle
\begin{aligned}
\nabla_{\phi} J(\pi_{\phi}) &= \int P^{\mathcal{S}}\left(x^{\mathcal{S}}, u^{\mathcal{S}}, x^{\prime \mathcal{S}}\right) \, \mathbb{E}_{w \sim \pi_{\phi}(\mathcal{D_S}, \cdot)} \Bigg[ \nabla_{\phi} \log \pi_{\phi}(\mathcal{D_S}, w) \cdot r_\phi\left(x^{\mathcal{S}}, u^{\mathcal{S}}, x^{\prime \mathcal{S}}, \Delta_\theta\right) \\
&\quad + \nabla_{\phi} r_\phi\left(x^{\mathcal{S}}, u^{\mathcal{S}}, x^{\prime \mathcal{S}}, \Delta_\theta\right) \Bigg] \, d\left(x^{\mathcal{S}}, u^{\mathcal{S}}, x^{\prime \mathcal{S}}\right).
\end{aligned}$
}
\end{equation}

Finally, by interpreting the integral over \( P^{\mathcal{S}} \) and the expectation over \( \pi_{\phi} \) jointly as an expectation with respect to the distribution of trajectories under the current policy, we arrive at the final expression:

\begin{equation}
\resizebox{0.67\hsize}{!}{%
$\displaystyle
\begin{aligned}
    \nabla_{\phi} J\left(\pi_{\phi}\right) 
    &= \mathbb{E}_{\substack{(x^{\mathcal{S}}, u^{\mathcal{S}}, x^{\prime \mathcal{S}}) \sim P^{\mathcal{S}}\\ w \sim \pi_{\phi}(\mathcal{D_S}, \cdot)}} \Bigg[ r_\phi\left(x^{\mathcal{S}}, u^{\mathcal{S}}, x^{\prime \mathcal{S}}, \Delta_\theta\right) \cdot \nabla_{\phi} \log \pi_{\phi}(\mathcal{D_S}, w) \\
    & \quad + \nabla_{\phi} r_\phi\left(x^{\mathcal{S}}, u^{\mathcal{S}}, x^{\prime \mathcal{S}}, \Delta_\theta\right) \Bigg].
\end{aligned}$
}
\end{equation}
\section{Additional Experimental Results}
\subsection{Runtime Analysis of Offline RL Methods}
\label{runtime-analysis}
\autoref{fig-runtime} shows the runtimes of CLTV compared to Vanilla, CUORL, and Harness across different datasets and offline RL algorithms.

In the Ant domain, CLTV has a higher runtime compared to CQL and IQL. A similar pattern is observed in the Hopper and Walker2d domains, where CLTV’s runtime is generally higher than the other methods across most cases. However, the gains achieved in key learning tasks, particularly in the expert-level datasets, justify the additional computational cost. In the HalfCheetah domain, the runtime differences between CQL and IQL methods are smaller, but CLTV still incurs higher computational overhead compared to other methods, particularly in the random-medium setting. 

In conclusion, while CLTV may not always achieve the shortest runtimes, the performance improvements it provides (as reported in \autoref{tab:normalized-score}) justify the additional computational time. Across different environments and datasets, CLTV offers a reasonable trade-off between runtime and learning performance, making it a practical and effective option for offline RL tasks.

\begin{figure}[!ht]
\center
\includegraphics[width=0.32\textwidth]{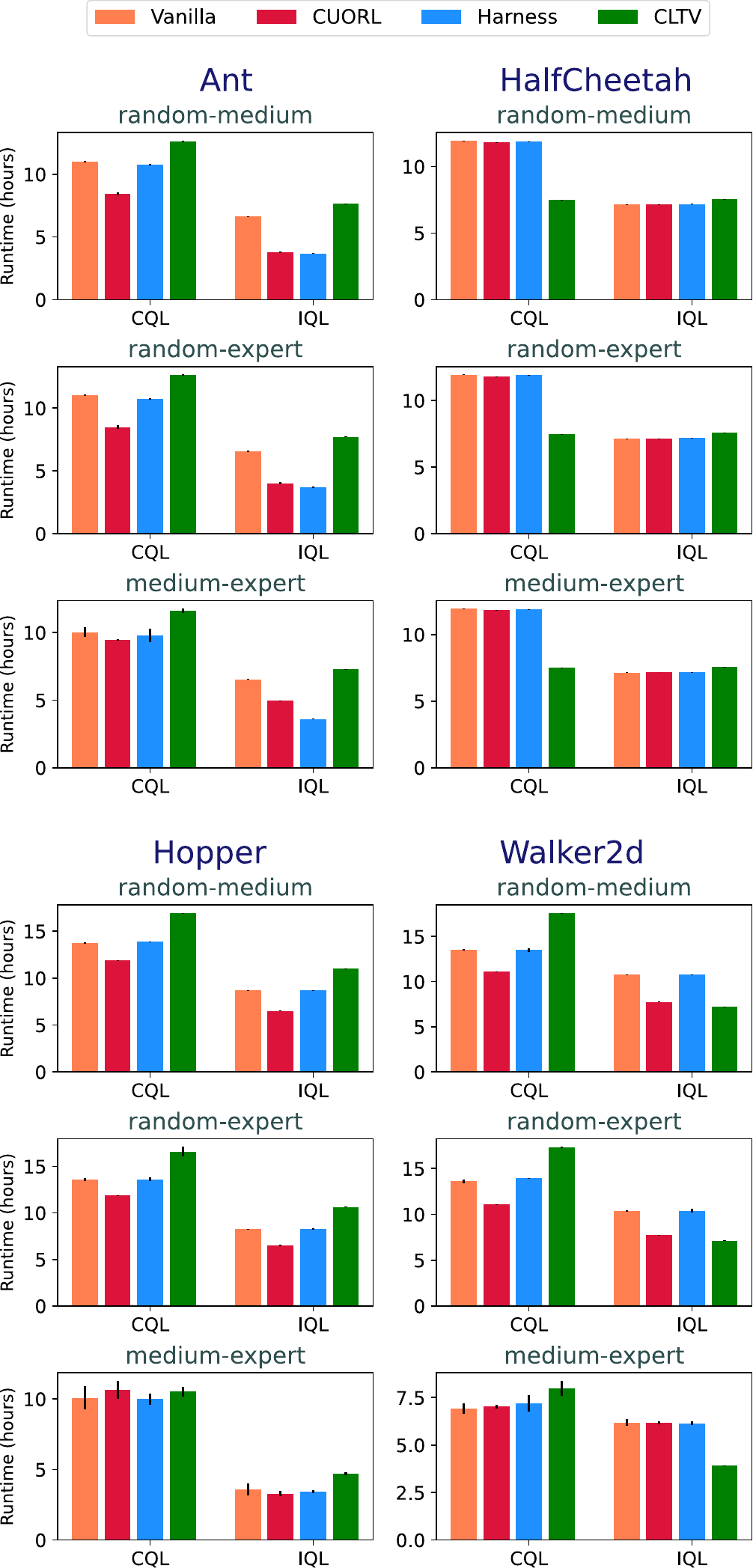}
\caption{Runtime analysis of offline RL algorithms.}
\label{fig-runtime}
\end{figure}
\subsection{Choice of Reward Function in CLTV}
\label{rew-choice}
To demonstrate the effectiveness of our reward function, we conducted two comparative experiments: one using the temporal difference (TD)~\citep{tdl} and the other using the reward shaping (RS)~\citep{pbrs} to modify the rewards of transitions. 

The results of the comparison are presented in \autoref{tab:reward-functions}, where CLTV-TD corresponds to the CLTV model using TD as the reward function (\( r_{TD} \)), while CLTV-RS uses RS as the reward function (\( r_{RS} \)).

The TD reward function is presented in \autoref{td-rf}:

\begin{equation}
\label{td-rf}
r_{\mathrm{TD}}=V(x)+\alpha\left(r+\gamma V\left(x^{\prime}\right)-V(x)\right)
\end{equation}

Similarly, the RS reward function is presented in \autoref{rs-rf}:

\begin{equation}
\label{rs-rf}
r_{\mathrm{RS}}=\gamma V\left(x^{\prime}\right)-V(x)
\end{equation}

\autoref{tab:reward-functions} compares the performance of CLTV with its variants, CLTV-TD and CLTV-RS, across all considered environments and datasets, using CQL and IQL as base algorithms.

In the Ant domain, CLTV achieves higher normalized scores with both CQL and IQL algorithms. For example, on the random-medium dataset, CLTV reaches a score of \(97.48\), compared to CLTV-TD's \(73.36\) and CLTV-RS's \(73.50\). Similarly, in the random-expert dataset, CLTV attains \(115.86\), outperforming the other methods. 

In the HalfCheetah domain, CLTV also exhibits superior performance, particularly on the medium-expert dataset, where it achieves \(59.88\) using the CQL algorithm. This result is higher than the scores of \(0.50\) and \(28.31\) recorded for CLTV-TD and CLTV-RS, respectively. For the IQL algorithm, CLTV performs well, reaching a score of \(77.10\) and clearly surpassing the other methods. 

The Walker2d and Hopper domains further highlight the advantages of CLTV. In the random-expert dataset of Walker2d, CLTV achieves \(97.43\) using CQL, outperforming both variants by a substantial margin. Similarly, in the Hopper domain, CLTV outperforms its counterparts on the medium-expert dataset, achieving \(83.44\) compared to the lower scores of the TD and RS variants. 

The results show that our reward function outperforms the other two in almost all cases. This suggests that our reward function is particularly effective for domain transfer learning.

\begin{table}[!ht]
\caption{{Performance of our CLTV method compared to two variants: CLTV-TD, which uses temporal difference (TD), and CLTV-RS, which uses reward shaping (RS). Normalized scores with standard deviations over 100 episodes and 5 seeds on mixed D4RL datasets are reported using the base algorithms CQL and IQL. The highest scores are highlighted in blue.}}
\label{tab:reward-functions}
\centering
\resizebox{\linewidth}{!}{
\renewcommand{\arraystretch}{1.4}
\begin{tabular}{@{}ccllll@{}}
\toprule
\textbf{Domain} & \textbf{\shortstack{RL \\ Algorithm}} & \textbf{Method} & \multicolumn{3}{c}{\textbf{Dataset}}\\
\multicolumn{1}{l}{} & \multicolumn{1}{l}{} & \multicolumn{1}{l}{} & \multicolumn{1}{c}{\textbf{random-medium}} & \textbf{random-expert} & \textbf{medium-expert}\\
\hline
\multirow{6}{*}{\rotatebox{90}{\Large Ant}} & \multirow{3}{*}{CQL} & CLTV-TD &  73.36 ± 6.72 & 95.08 ± 23.14 & 1.17 ± 46.66 \\
 &  & CLTV-RS & 73.50 ± 4.48 & 62.98 ± 12.38 & 6.48 ± 26.03 \\
 &  & CLTV & \paddedcolorbox{LightCyan}{97.48} ± 3.63 & \paddedcolorbox{LightCyan}{115.86} ± 6.82 & \paddedcolorbox{LightCyan}{24.84} ± 13.70 \\ \cmidrule{3-6}
 & \multirow{3}{*}{IQL} & CLTV-TD & 18.78 ± 8.78 & 7.87 ± 1.94 & 113.07 ± 3.89 \\
 &  & CLTV-RS & 75.11 ± 7.19 & 50.80 ± 11.16 & 111.82 ± 3.99 \\
 &  & CLTV & \paddedcolorbox{LightCyan}{78.67} ± 8.26 & \paddedcolorbox{LightCyan}{88.26} ± 4.67 & \paddedcolorbox{LightCyan}{117.00} ± 7.28 \\ 
  \hline
\multirow{6}{*}{\rotatebox{90}{\Large HalfCheetah}} & \multirow{3}{*}{CQL} & CLTV-TD & 36.87 ± 1.73 & 2.69 ± 2.20 & 0.50 ± 3.82 \\
 &  & CLTV-RS & 36.44 ± 2.07 & 0.11 ± 1.33 & 28.31 ± 7.40 \\
 &  & CLTV & \paddedcolorbox{LightCyan}{44.13} ± 3.47 & \paddedcolorbox{LightCyan}{10.37} ± 2.51 & \paddedcolorbox{LightCyan}{59.88} ± 7.91 \\ \cmidrule{3-6}
 & \multirow{3}{*}{IQL} & CLTV-TD & 3.94 ± 2.28 & 6.65 ± 2.60  & 46.52 ± 0.00 \\
 &  & CLTV-RS & 36.78 ± 1.33 & 8.91 ± 3.32 & 51.45 ± 3.03 \\
 &  & CLTV & \paddedcolorbox{LightCyan}{41.83} ± 0.63 & \paddedcolorbox{LightCyan}{16.28} ± 7.22 & \paddedcolorbox{LightCyan}{77.10} ± 5.16 \\ 
 \hline
\multirow{6}{*}{\rotatebox{90}{\Large Hopper}} & \multirow{3}{*}{CQL} & CLTV-TD & 7.30 ± 10.74 & 7.42 ± 6.90 & 30.42 ± 35.61 \\
 &  & CLTV-RS & 12.28 ± 11.99 & 6.79 ± 0.00 & 64.48 ± 35.66 \\
 &  & CLTV & \paddedcolorbox{LightCyan}{51.04} ± 3.21 & \paddedcolorbox{LightCyan}{56.23} ± 15.14 & \paddedcolorbox{LightCyan}{83.44} ± 16.95 \\ \cmidrule{3-6}
 & \multirow{3}{*}{IQL} & CLTV-TD & 0.19 ± 0.00 &  0.17 ± 0.00 & 42.31 ± 16.43 \\
 &  & CLTV-RS & 61.95 ± 6.24 & 30.79 ± 2.23 & 32.53 ± 12.08 \\
 &  & CLTV & \paddedcolorbox{LightCyan}{55.79} ± 4.44 & \paddedcolorbox{LightCyan}{39.56} ± 2.78 & \paddedcolorbox{LightCyan}{70.73} ± 4.11 \\ 
 \hline
\multirow{6}{*}{\rotatebox{90}{\Large Walker2d}} & \multirow{3}{*}{CQL} & CLTV-TD & 26.68 ± 20.97 & 74.62 ± 14.59 & 7.06 ± 4.56 \\
 &  & CLTV-RS & 41.53 ± 10.37 & 66.38 ± 40.72 & 2.59 ± 1.55 \\
 &  & CLTV & \paddedcolorbox{LightCyan}{70.45} ± 8.72 & \paddedcolorbox{LightCyan}{97.43} ± 7.74 & \paddedcolorbox{LightCyan}{30.23} ± 41.25 \\ \cmidrule{3-6}
 & \multirow{3}{*}{IQL} & CLTV-TD & 39.59 ± 13.09 & 18.84 ± 19.42 & 73.06 ± 5.16 \\
 &  & CLTV-RS & 64.97 ± 5.54 & 35.34 ± 17.71 & 89.05 ± 9.94 \\
 &  & CLTV & \paddedcolorbox{LightCyan}{68.37} ± 4.12 & \paddedcolorbox{LightCyan}{89.51} ± 9.03 & \paddedcolorbox{LightCyan}{110.74} ± 0.66 \\ 
\bottomrule
\end{tabular}
}
\end{table}
\subsection{Similarity-Reward Trade-off Parameters}
\label{par-impact}
We examine how the parameters \(\delta\) and \(\lambda\) affect performance across different domains, datasets, and algorithms in offline RL.

The parameter \(\delta\) balances the importance of transition similarity, which refers to the transition score or its relevance to the target domain, against the actual reward received, allowing the model to focus appropriately on both aspects during learning. Tuning \(\delta\) is important when applying the model to datasets with different dynamics, ensuring the model generalizes well without overfitting to the source data. 

The parameter \(\lambda\), on the other hand, plays a key role in balancing exploration and exploitation. A higher \(\lambda\) value encourages more exploration by allowing the model to take actions that may not immediately seem optimal, but could lead to better long-term outcomes. In contrast, a lower \(\lambda\) value favors exploitation, where the model sticks to actions that have previously yielded high rewards.

The heatmaps in \autoref{fig-deltalambda-cql} show that the choice of \(\delta\) and \(\lambda\) has a noticeable effect on CLTV (CQL) performance across different environments. Moderate values for both parameters generally lead to better results. For instance, in environments like Ant and Hopper, higher \(\lambda\) values enhance the balance between exploration and exploitation, while moderate \(\delta\) values allow for greater flexibility in learning without overfitting.

Similarly, the heatmaps in \autoref{fig-deltalambda-iql} show that performance in CLTV (IQL) is influenced by \(\delta\) and \(\lambda\), though the algorithm tends to be more stable across different settings. The heatmaps indicate that higher \(\lambda\) values help the model explore effectively, preventing it from getting stuck in suboptimal solutions. At the same time, moderate \(\delta\) values strike a balance between leveraging current estimates and improving both policy and value functions. This suggests that fine-tuning \(\lambda\) helps the model explore better, while \(\delta\) adjusts how much it sticks to what it has already learned.

When we analyze the impact of \(\lambda\) and \(\delta\) on CLTV (CQL) and CLTV (IQL) across different datasets, we see clear differences in how each algorithm reacts. In CLTV (CQL), especially in the Ant environment with the medium-expert dataset, performance is sensitive to changes in these parameters. For instance, as \(\lambda\) increases from 0.0 to 1.0, we see improvements in rewards. This highlights how \(\lambda\) helps manage the balance between policy conservativeness and exploration. In contrast, CLTV (IQL) appears to be less affected by changes in \(\lambda\) and \(\delta\). For example, in the Ant environment, CLTV (IQL) achieves consistently high rewards across various parameter settings. This shows that CLTV (IQL) handles exploration and exploitation more efficiently without needing significant tuning of these parameters. 

In summary, \(\lambda\) and \(\delta\) have different impacts on CLTV (CQL) and CLTV (IQL). For CLTV (CQL), higher \(\lambda\) values and moderate \(\delta\) values tend to result in better performance, especially in environments like Ant. On the other hand, CLTV (IQL) performs well across a wide range of \(\lambda\) and \(\delta\) values, reducing the need for fine-tuning. Understanding the effect of these parameters is essential for configuring algorithms in offline RL, where they can have a major influence on the performance of the learned policies.

\begin{figure*}[!ht]
\center
\includegraphics[width=\textwidth]{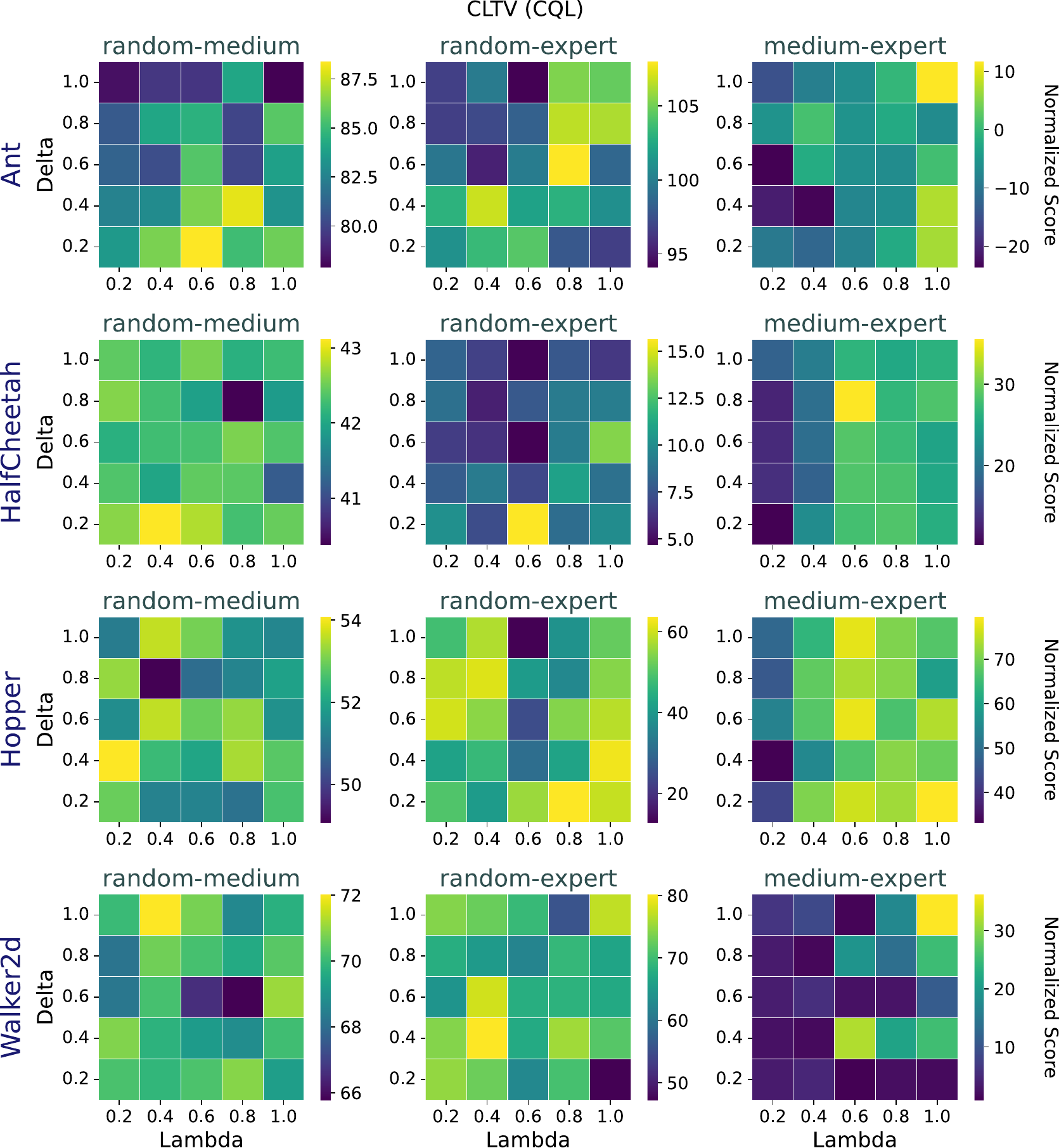}
\caption{{Heatmaps illustrating the performance of CLTV (CQL) on mixed datasets with respect to different $\delta$ (Delta) and $\lambda$ (Lambda) values, ranging from 0.2 to 1.0 in increments of 0.2, evaluated over 100 episodes with one seed.}}
\label{fig-deltalambda-cql}
\end{figure*}

\begin{figure*}[!ht]
\center
\includegraphics[width=\textwidth]{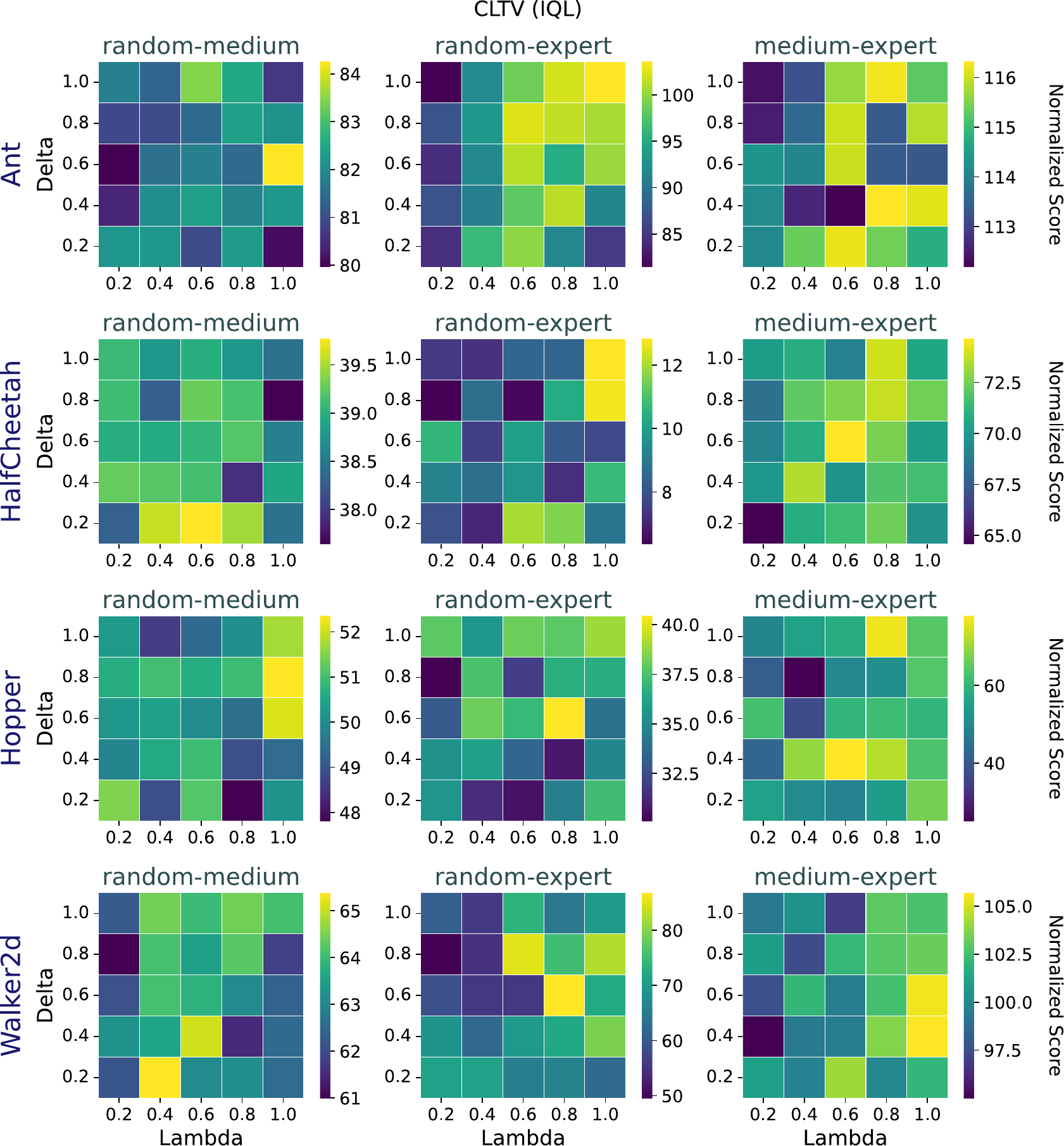}
\caption{{Heatmaps illustrating the performance of CLTV (IQL) on mixed datasets with respect to different $\delta$ (Delta) and $\lambda$ (Lambda) values, ranging from 0.2 to 1.0 in increments of 0.2, evaluated over 100 episodes with one seed.}}

\label{fig-deltalambda-iql}
\end{figure*}

\clearpage  
\makeatletter
\@twocolumnfalse
\makeatother

\begin{paracol}{2}  
    \switchcolumn[0]
    \section{Experimental Details}
\label{ex-details}

\subsection{Implementation and Computational Resources}
\label{sec-technical-details}
All the experiments were conducted on a high-performance GPU cluster consisting of five interconnected compute nodes, each equipped with eight NVIDIA A100 Tensor Core GPUs with 40 GB of memory per unit. Our method and the experiments were implemented in Python 3.10 under Ubuntu 22.10. Our code
is available at \href{https://github.com/amir-abolfazli/CLTV}{https://github.com/amir-abolfazli/CLTV}.

\subsection{Parameter Tuning}
\label{param-tuning}
For the base offline RL algorithms (CQL and IQL), we used the optimal hyperparameter values reported in~\citep{seno2022d3rlpy}, as listed in \autoref{tab:parameters-base}.

\begin{table}[!h]
\caption{Hyperparameter values of base offline RL methods.}
\label{tab:parameters-base}
\centering
\resizebox{0.95\linewidth}{!}{
\renewcommand{\arraystretch}{1.4}
\begin{tabular}{l l c l}
 \toprule
        & Hyperparameter  & Value & Description \\
    \hline
    CQL & Actor Learning Rate & $1 \times 10^{-4}$ & Learning rate for training policy \\
         & Critic Learning Rate & $3 \times 10^{-4}$ & Learning rate for training Q network \\
         & Temperature Learning Rate & $1 \times 10^{-4}$ & Learning rate for temperature parameter of SAC \\
         & Alpha Learning Rate $\tau$ & $1 \times 10^{-4}$ & The learning rate for parameter alpha \\
         & Discount Factor & $0.99$ & The factor of discounted return \\
         & Target Network $\tau$ & $5 \times 10^{-3}$ & The target network synchronization coefficiency \\
    \hline
    IQL & Actor Learning Rate & $3 \times 10^{-4}$ & Learning rate for training policy \\
        & Critic Learning Rate & $3 \times 10^{-4}$ & Learning rate for training Q network \\
        & Discount Factor & $0.99$ & The factor of discounted return \\
        & Target Network $\tau$ & $5 \times 10^{-3}$ & The target network synchronization coefficiency \\
        & Expectile & $0.7$ & The expectile value for value function training \\
 \bottomrule
\end{tabular}
}
\end{table}

For our CLTVORL method, we selected the optimal hyperparameter values by grid search, as listed in \autoref{tab:parametes-cltvorl}.

\begin{table}[!h]
\caption{Hyperparameter values of our CLTV method.}
\label{tab:parametes-cltvorl}
\centering
\resizebox{0.95\linewidth}{!}{
\renewcommand{\arraystretch}{1.4}
\begin{tabular}{l c l}
 \toprule
     Hyperparameter  & Value & Description \\
    \hline
    Score-Reward Ratio $ \lambda$ & $0.8$ & The ratio of transition score to transition reward of TS \\
    Similarity-Reward Ratio $\delta$ & $0.7$ & The ratio of transition similarity to the reward of TS \\
    Episode Ratio $m$ & 0.1 & The number of episodes sampled from the source dataset \\
    Batch Size & $200$ & The batch size of TS\\
    Hidden Layers & [256, 256] & The size of hidden layers for TS  \\
    Classifier Hidden Size & $256$ & The hidden size of classifiers \\
    Classifier Learning Rate & $3 \times 10^{-4}$ & Learning rate of classifiers \\
    Gaussian Standard Deviation & $0.1$ & The standard deviation of Gaussian distribution \\
 \bottomrule
\end{tabular}
}
\end{table}

    \switchcolumn
    \vfill
\end{paracol}

\end{document}